\newtheorem{conj}{Conjecture}[section]
\newtheorem{theorem}[conj]{\bf Theorem}
\newtheorem{definition}[conj]{\bf Definition}
\newtheorem{corollary}[conj]{\bf Corollary}
\newtheorem{lemma}[conj]{\bf Lemma}
\providecommand{\customgenericname}{}
\newcommand{\newcustomtheorem}[2]{%
  \newenvironment{#1}[1]
  {%
   \renewcommand\customgenericname{#2}%
   \renewcommand\theinnercustomgeneric{##1}%
   \innercustomgeneric
  }
  {\endinnercustomgeneric}
}
\newcommand{\resolution}{r}
\newcommand{\VC}{\operatorname{VC}}
\newcommand{\seven}{\bgroup
  \sbox0{7}\usebox0\llap{\rule[.5\ht0]{.4\wd0}{.05\ht0}\rule{.24\wd0}{0pt}}
\egroup}
\def\to{\rightarrow}
\def\R{{\mathbb R}}
\def\EE{ {\rm I} \kern-.15em {\rm E} }
\def\PP{ {\rm I} \kern-.15em {\rm P} }
\def\1{\mathbbm{1}}
\newcommand{%
    
    \import{./Figures/}{.pdf_tex}
}[1]{%
    
    \import{./Figures/}{#1.pdf_tex}
}
\title{On the VC dimension of deep group convolutional neural networks}
\author[1,2]{Anna Sepliarskaia}
\author[1]{Sophie Langer}
\author[1]{Johannes Schmidt-Hieber}
\affil[1]{Department of Applied Mathematics, University of Twente}
\affil[2]{Booking.com}
\begin{document}

\listoffixmes
\clearpage

\maketitle
\begin{abstract}
    We study the generalization capabilities of Group Convolutional Neural Networks (GCNNs) with ReLU activation function by deriving upper and lower bounds for their Vapnik-Chervonenkis (VC) dimension. Specifically, we analyze how factors such as the number of layers, weights, and input dimension affect the VC dimension. We further compare the derived bounds to those known for other types of neural networks. Our findings extend previous results on the VC dimension of continuous GCNNs with two layers, thereby providing new insights into the generalization properties of GCNNs, particularly regarding the dependence on the input resolution of the data.
\end{abstract}

\section{Introduction}
Convolutional Neural Networks (CNNs) have revolutionized the field of computer vision, achieving remarkable success in tasks such as image classification \citep{krizhevsky2012imagenet}, object detection~\citep{ren2016faster}, and segmentation~\citep{long2015fully}.  Their effectiveness can be partly attributed to their translation invariant architecture, enabling CNNs to recognize objects regardless of their position in an image. However, while CNNs are effective at capturing translation symmetries, there has been a growing interest in incorporating additional structure into neural networks to handle a wider range of transformations. These architectures aim to combine the flexibility of learning with the robustness of structure-preserving features (see, e.g., \citep{hinton2011, Lee2015}).
\\
\\
GCNNs were first introduced by \cite{pmlr-v48-cohenc16} to improve statistical efficiency and enhance geometric reasoning. Since then equivariant network structures have evolved to support equivariance on Euclidean groups \citep{bekkers2018roto, bekkers2019b, weiler2018learning}, compact groups \citep{kondor2018generalization} and Riemannian manifolds \citep{weiler2021coordinate}. More recent architectures have even been generalized beyond other types of symmetry groups \citep{zhdanov2024clifford, dehmamy2021automatic, MR4541121}
\\
\\
These advancements have significantly broadened the applicability of CNNs to more complex tasks including fluid dynamics ~\citep{wang2020incorporating, vinuesa2022enhancing}, electrodynamics \citep{zhdanov2024clifford}, medical image segmentation \citep{bekkers2018roto}, partial differential equation (PDE) solvers \citep{brandstetter2022lie}, and video tracking \citep{sosnovik2021scale}.
A notable example of the success of equivariant neural networks is the AlphaFold algorithm, which achieves high accuracy in protein structure prediction through a novel roto-translation equivariant attention architecture of the neural network \citep{jumper2021highly, fuchs2020se}.

Equivariant neural networks are expected to have smaller sample complexity if compared to neural networks without built-in symmetries \citep{bietti2021sample, sannai2021improved, elesedy2022group, shao2022theory}. Given the symmetries are present in the data, equivariant networks should therefore achieve comparable performance with fewer examples.

To investigate this hypothesis, we study the generalization capabilities of a GCNN class that is invariant to group actions. 
We compute upper and lower bounds for the Vapnik-Chervonenkis (VC) dimension of this class and compare them with the VC dimension of deep feedforward neural networks (DNNs).

Previously, the VC dimension has been used to analyze the generalization properties of various network architectures, including vanilla DNNs \citep{bartlett2019nearly, anthony1999neural} and CNNs \citep{kohlerwalter2023}. However, to the best of our knowledge, the only result on the VC dimension of GCNNs pertains to two-layer GCNNs with continuous groups, which were shown to have an infinite VC dimension \citep{petersen2024vc}.
Our work differs from earlier research by analyzing the VC dimension of GCNNs in relation to the number of layers, weights, and input dimensions of the neural network, without restricting the architecture to just two layers (as in \citep{petersen2024vc}) or DNNs (as in \citep{bartlett2019nearly}).

\section{GCNNs}

\subsection{Group theoretic concepts}
GCNNs capture symmetries in data via group theory. A \textit{group} $G$ is a set equipped with an operation $\circ$ such that $h,g \in G$ implies $h\circ g \in G;$ there exists an identity element $e\in G$ such that  \( e \circ g = g \circ e = g \) for all \( g \in G \); there exists an inverse element \( g^{-1} \in G \) such that \( g^{-1}\circ g = g \circ g^{-1} = e \); and for any \( g, h, i \in G \), we have \( (g \circ h) \circ i = g \circ (h \circ i) \).

A \textbf{group action} describes how a group interacts with another set. More specifically, an \emph{action} of $G$ on a set $\mathcal{X}$ is a map $\circ: G \times \mathcal{X} \rightarrow \mathcal{X}$ such that for all $g_1, g_2\in G$ and for all $x\in \mathcal{X}$
\begin{align*}
    (g_1 g_2) \circ x = g_1\circ(g_2\circ x).   
\end{align*}

On the functions $\{ f : G \to \mathbb{R} \},$ the \emph{left regular representation} of a group \( G \) is the action 
\begin{align}
    (g \circ f)(g') = f(g^{-1} g'), \quad \text{for any} \ g, g' \in G.    
    \label{eq.left_reg}
\end{align}

For a so-called kernel function \( \mathcal{K}: G \rightarrow \mathbb{R} \) and a function \( f: G \rightarrow \mathbb{R} \), the group convolution of \( \mathcal{K} \) with \( f \) at an element \( g \in G \) is defined as
\begin{equation}
\label{Eq:convolution}
(\mathcal{K} \star f)(g) := \int_{G} \mathcal{K}(g^{-1} g') \cdot f(g') \, d\mu(g'),
\end{equation}
with \( \mu \) being the Haar measure \cite[Theorem 8.1.2]{procesi2007lie}. Note that, the Haar measure is a unique left-invariant measure that exists for all locally compact groups (see~\cite[Definition 1.18]{stroppel2006locally}). For the integral in \eqref{Eq:convolution} to be well-defined, we assume that both, the signal \( f \) and the kernel \( \mathcal{K} \), are measurable and bounded. 

The group convolution computes a weighted average over the group elements. It is equivariant with respect to the left regular representation of \( G \) defined in \eqref{eq.left_reg}, that is, \(g \circ (\mathcal{K} \star f) = (\mathcal{K} \star (g \circ f))
\) for any group element \( g \).

In the case where \( G \) is a finite group, the group convolution becomes the sum
\begin{align}
\label{convolution_discrete}
(\mathcal{K} \star f)(g) = \frac{1}{|G|} \sum_{g' \in G} \mathcal{K}( g^{-1} g') \cdot f(g').
\end{align}

In practice, continuous groups such as rotations or translations are often discretized, and computations are performed on finite grids. This discretization process involves approximating the group \( G \) by a finite subset 
\[
G^r = \{ g_1, g_2, \dots, g_r \}.
\]
We refer to the cardinality \( r \) as the resolution of the discretization. 

For example, for the group formed by all continuous rotations, the discretization selects a finite number of rotation angles. For the translation group, discretization consists of a finite number of shifts. 

After discretization, the group convolution operates on this finite set. Ignoring the reweighting it is then given by
\begin{align}
\label{convolution_discrete2}
(\mathcal{K} * f)(g) := \sum_{j=1}^r \mathcal{K}( g^{-1} g_j) \cdot f(g_j).
\end{align}
This operation is called \textit{G-correlation} \citep{cohen2016group}. Note that $(\mathcal{K} * g)=r \cdot (\mathcal{K} \star g)$. If $\mathcal{K}=\mathbf{1}(\cdot=e)$ with $e$ the identity element of the group, then, on $G^r,$ $K * f= f$.
The $G$-correlation heavily depends on the discretization of the group and can differ significantly from the integral version \eqref{Eq:convolution}. However, both definitions become approximately the same (up to rescaling) if the elements $g_j$ are drawn from the Haar measure. In the case of GCNNs, the discretization is determined by the structure of the data.

\subsection{GCNN architecture}
\label{sec.architecture}

To parametrize the kernel function, let 
\begin{equation}
\label{kernel_space}
\texttt{K}_s : G \rightarrow \mathbb{R},\quad  s = 1, \dots, k    
\end{equation}
be a set of basis functions. The kernel functions \( \mathcal{K}_{\mathbf{w}} \) are then expressed as linear combinations of these basis functions, that is,
\begin{align}
\mathcal{K}_{\mathbf{w}} = \sum_{s=1}^{k} w_s \texttt{K}_s,
 \label{eq.bcusdyb}
\end{align}
where \( \mathbf{w} = (w_1, \dots, w_k) \) is the vector of trainable parameters.

Given an activation function $\sigma: \R\to \R,$ a group convolutional unit (GCNN unit) or G-convolution takes an input function on the discretized group \(f=(f_1,\ldots, f_{m_0})^T: G^r \rightarrow \mathbb{R}^{m_0} \) as input and outputs another function on the discretized group \( h: G^r \rightarrow \mathbb{R} \). The output $h$ is  
\begin{align}
\label{cu_GCNN2}
  h=\sigma \bigg(\sum_{i=1}^{m_0}   \mathcal{K}_{\mathbf{w}_i} * f_i - b\bigg),
\end{align}
with the group convolution operation $*$ defined in \eqref{convolution_discrete2}. The weight vectors $\mathbf{w}_1,\ldots,\mathbf{w}_m$ and the bias $b$ are parameters that are learned from the data. In line with the common terminology, we refer to the output as feature map. 

A GCNN layer (also called $G$-convolutional layer or $G$-conv layer) computes several GCNN units in parallel, using the same input functions but applying different kernel functions (also known as filters), each with potentially different parameters. Specifically, a GCNN layer with $M$ units and input function \( f = (f_1, \dots, f_{m_0})^T: G^r \rightarrow \mathbb{R}^{m_0},  \) computes the following $M$ functions  
\begin{align}
\label{cu_GCNN3}
  h_j=\sigma \bigg(\sum_{i=1}^{m_0}   \mathcal{K}_{\mathbf{w}_{ij}} * f_i - b_j\bigg), \quad j=1,\ldots, M,
\end{align}
where $\mathcal{K}_{\mathbf{w}_{ij}}$ connects the $i$-th input with the $j$-th output. The trainable parameters in the GCNN layer are the weight vectors $\mathbf{w}_{ij}$ and the biases $b_j.$ As other network architectures, GCNNs are typically structured hierarchically, with several GCNN layers followed by fully connected layers. We assume that the input of the first GCNN layer are already functions on the discretized group. 

Let $L$ be the number of GCNN layers and assume that the respective numbers of GCNN units in the layers $1,\ldots,L$ is denoted by $m_1,\ldots,m_L.$ To derive a recursive formula for the GCNN, we denote the inputs of the GCNN by $h_{0,1},\ldots,h_{0,m_0}.$ Note that they are also assumed to be functions $G^r \to \mathbb{R}.$ If for a given $\ell=0,\ldots, L-1,$ $h_{\ell,1}, \ldots, h_{\ell,m_\ell}$ are the outputs (also known as feature maps) of the $\ell$-th GCNN layer, then, the $m_{\ell+1}$ outputs of the $(\ell+1)$-st GCNN layer are given by 
\begin{align}
\label{cu_GCNN}
  h_{\ell+1, j}=\sigma \bigg(\sum_{i=1}^{m_\ell}   \mathcal{K}_{\mathbf{w}_{ij}^{(\ell)}} * h_{\ell,i} - b_j^{(\ell)}\bigg)
\end{align}
for $=1, \dots, m_{\ell+1}$.
The final output of the GCNN is given by
\begin{equation}
\label{final_pooling}
\sum_{i=1}^{m_L} \sum_{g \in G^r} h_{L,i}(g), 
\end{equation}
which equals up to reweighting a global average pooling operation. As we take the sum over all group elements $g$, this operation makes the network invariant instead of equivariant to geometric transformation (see, e.g., \citep{kondor2018generalization, keriven2019universal, bekkers2018roto}).

In our work, we consider the ReLU activation function $\sigma(x)=\max\{x,0\}.$ We denote the class of ReLU GCNNs with \( L \) layers, \( m_i \) units in layer \( i=0,\ldots, L \), \( k \) dimensional weight vectors in \eqref{eq.bcusdyb}, and $r$ the cardinality of the discretized group by
\begin{equation}
\label{GCNNclass}
\mathcal{H}(k, m_0, \ldots, m_L, r).  
\end{equation}

The learnable parameters are the vectors $\mathbf{w}_{ij}^{(\ell)}$ and the biases $b_j^{(\ell)}$ for $j=1,\ldots, m_\ell,$ $\ell=0,\ldots,L-1.$
During the training phase, they are updated through gradient-based optimization techniques, such as stochastic gradient descent (SGD). The updates aim to minimize an objective function, typically a loss function measuring the difference between the network predictions and the true labels. While in practice, GCNN architectures also include feedforward layers, we only focus in this work on the GCNN layers. 

\subsection{CNNs as a specific case of GCNNs}

The convolutional layer in CNNs can be obtained as a specific case of the GCNN layer for the translation  group \(T\). An element \( \mathbf{t} \) of the translation group corresponds to a vector \( (t_1, t_2) \in \mathbb{R}^2 \), and the group operation is defined as \( \mathbf{t} \circ \mathbf{t'} = (t_1 + t'_1, t_2 + t'_2) \), meaning that one vector is shifted by the components of another. The inverse of a translation \( \mathbf{t} \) is \( \mathbf{t}^{-1}=(-t_1, -t_2) \), reversing the direction of the shift.

A square image can be interpreted as a function on the unit square \([0,1] \times [0,1]\). Setting the function values to zero outside the unit square, it can then be extended to a function on $\mathbb{R}^2.$ Since $\mathbb{R}^2$ is isomorphic to the translation group, a square image can thus be viewed as a function on the translation group \(T\).

To illustrate the discretization step, consider the MNIST dataset~\citep{lecun1998gradient}, which consists of grayscale images of handwritten digits ranging from 0 to 9. Each MNIST image is represented by grayscale values. This means that $m_0$ is equal to 1, as the image is characterized solely by pixel brightness. In contrast, for an RGB image, $m_0=3$, corresponding to the red, green, and blue components of each pixel.

Furthermore, since the values are only on a \( 28 \times 28 \) pixel grid, the translation group \(T\) is discretized by
\[
T^{784} = \left\{ \left(\frac{i}{28}, \frac{j}{28}\right) \big| \,  i, j \in [28] \right\}.
\]
In turn, we can view an MNIST image as a function $f$ on $T^{784}.$ Note that the function value $f(\tfrac i{28}, \tfrac j{28})$ is the $(i,j)$-th pixel value.

In CNNs, the coefficients of the convolutional filters are typically represented by $s \times s$ weight matrices, with $s$ a prechosen integer. For simplicity, we consider $s=3$ in the following. The convolutional filter computes
\begin{align*}
\sum_{\ell, k=-1}^{1} w_{\ell+1, k+1} 
    f\Big(\frac{i+\ell}{28},\frac{j+k}{28}\Big), \quad i,j=1,\ldots, 28.
\end{align*}
For the kernel
 \begin{align*}
&\mathcal{K}_\mathbf{w}(u_1,u_2) = \sum_{\ell, k=-1}^{1}  w_{\ell+1, k+1} \, \mathbf{1}\Big((u_1, u_2)=\left(\frac{\ell}{28},\frac{k}{28}\right)\Big),
\end{align*} 
and $\mathbf{s}_{i,j}:=(i/28,j/28),$ this can also be rewritten in the form \eqref{convolution_discrete2} via 
\begin{align*}
    (\mathcal{K}_{\mathbf{w}} * f)(\mathbf{s}_{i,j}) &=  \sum_{\mathbf{t}\in T^{784}}\mathcal{K}_{\mathbf{w}}( \mathbf{s}_{i,j}^{-1} \circ \mathbf{t}) \cdot f(\mathbf{t}) \\
    &= \sum_{\mathbf{t}\in T^{784}}
    \sum_{\ell, k=-1}^{1}  w_{\ell+1, k+1} \cdot \mathbf{1}\Big(\left(t_1-\frac{i}{28}, t_2-\frac{j}{28}\right)\\
    &= \left(\frac{\ell}{28}, \frac{k}{28}\right)\Big) \cdot f(\mathbf{t}) \\
    &= \sum_{\ell, k=-1}^{1} w_{\ell+1, k+1} 
    f\Big(\frac{i+\ell}{28},\frac{j+k}{28}\Big).
\end{align*}
Thus, in this equivalence the kernel is a linear combination of indicator functions. 

\subsection{Comparison of GCNNs and deep feedforward neural networks}

GCNNs and deep feedforward neural networks (DNNs) differ in how their computational units are defined. In a DNN, each unit computes an affine transformations applied to the output of the previous layer, followed by an activation function \(\sigma\). If $\mathbf{z}$ is the output of the previous layer, $\mathbf{w}$ is a weight vector, and $b$ a bias, then a unit in a DNN computes 
\begin{align}
    \sigma \big(  \mathbf{w}^\top \mathbf{z}  - b \big).
    \label{eq.djsvgv}
\end{align}

The class of ReLU DNNs  with $L$ layers (that is, \(L-1\) hidden layers and one output layer), \(m_i\) units (or neurons) in layer \(i=0,\ldots, L\), and a single unit in the output layer (i.e., $m_L=1$) is denoted by
\begin{equation}
\label{DNNclass}
\mathcal{F}(m_0, \ldots, m_L).
\end{equation}
While both the DNN class $\mathcal{F}(m_0, \ldots, m_L)$ and the  GCNN class $\mathcal{H}(k, m_0, \ldots, m_L, r)$ share the same architectural structure and apply the ReLU activation function, they differ in their unit definition, meaning that $\mathcal{F}(m_0, \ldots, m_L)$ uses DNN units \eqref{eq.djsvgv} and $\mathcal{H}(k, m_0, \ldots, m_L, r),$ employs GCNN units \eqref{cu_GCNN2}.

\section{VC dimension of GCNNs}

We now derive upper bounds for the VC dimension of the GCNN class \(\mathcal{H}(k, m_0, \ldots, m_L, r)\).
We begin by formally introducing the VC dimension. 
\begin{definition}[Growth function, VC dimension, shattering]
Let $\mathcal{H}$ denote a class of functions from $\mathcal{F}$ to $\{-1, 1\}$ (often referred to as the hypotheses class). For any non-negative integer $m$, we define the growth function of $\mathcal{H}$ as the maximum number of distinct classifications of $m$ samples that can be achieved by classifiers from $\mathcal{H}$. Specifically, it is defined as:
\[
\Pi_\mathcal{H}(m) := \max_{f_1, \ldots, f_m \in \mathcal{F}} \left| \left\{ (h(f_1), \ldots, h(f_m)) : h \in \mathcal{H} \right\} \right|.
\]
If $\mathcal{H}$ can generate all possible $2^m$ classifications for a set of $m$ inputs, we say that $\mathcal{H}$ shatters that set. Formally, if
\[
\left| \left\{ (h(f_1), \ldots, h(f_m)) : h \in \mathcal{H} \right\} \right| = 2^m,
\]
we say $\mathcal{H}$ shatters the set $\{f_1, \ldots, f_m\}$.
\\
\\
The Vapnik-Chervonenkis dimension of $\mathcal{H}$, denoted as $\VC(\mathcal{H})$, is the size of the largest shattered set, specifically the largest $m$ such that $\Pi_\mathcal{H}(m) = 2^m$. If no such largest $m$ exists, we define $\VC(\mathcal{H}) = \infty$.
\end{definition} 

The VC dimension cannot be directly applied to a class of real-valued functions, such as neural networks. To address this, we follow the approach in \citep{bartlett2019nearly} and use the pseudodimension as a measure of complexity. The pseudodimension is a natural extension of the VC dimension and retains similar uniform convergence properties (see \citep{pollard1990empirical} and Theorem 19.2 in \citep{anthony1999neural}).

\begin{definition}[pseudodimension]
For a class \( \mathcal{H} \) of real-valued functions, we define its pseudodimension as \( \VC(\mathcal{H}) := \VC(\mathrm{sign}(\mathcal{H})) \), where
\[
\mathrm{sign}(\mathcal{H}) := \{ \mathrm{sign}(H - b)\mid H \in \mathcal{H},\ b\in\mathbb{R} \},
\]
where $\mathrm{sign}(x)$ is $1$ for $x>0$ and $-1$ otherwise. We write $\Pi_{\mathcal{H}}$ to denote a growth function of $\mathrm{sign}(\mathcal{H})$.
\end{definition}

For common parametrized function classes, the VC dimension relates to the number of parameters. For DNNs, the VC dimension further depends on the network depth, as discussed in \citep{bartlett2019nearly}.

The following result provides an upper bound on the VC dimension of the GCNN class \(\mathcal{H}(k, m_0, \ldots, m_L, r)\). Recall that \( L \) is the number of layers, \( m_i \) is the number of units in layer \( i=0,\ldots, L \),  weight vectors are \( k \) dimensional, and $r$ is the cardinality of the discretized group.

\begin{theorem}[Upper Bound]
\label{upper_bound}
The VC dimension of the GCNN class \(\mathcal{H}=\mathcal{H}(k, m_0, \ldots, m_L, r)\) is upper bounded by
\begin{equation}
\label{upper_bound_equation_gcnn}
UB(\mathcal{H}):=L+1 + 4 \bigg(\sum_{\ell=1}^L W_\ell\bigg) \log_2\bigg(8e r \sum_{\ell=1}^L m_\ell \bigg),
\end{equation}
with \(W_\ell\) the number of parameters up to the \(\ell\)-th layer, that is,
\begin{equation}
\label{number weights GCNN}
W_\ell := \sum_{j=1}^\ell m_j (k m_{j-1} + 1).
\end{equation}
\end{theorem}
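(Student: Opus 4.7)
My plan is to adapt the parameter-space partition argument of Bartlett, Harvey, Liaw and Mehrabian (\citep{bartlett2019nearly}) from the ReLU DNN setting to GCNNs. Fix a set $f_1,\ldots,f_m$ of input functions $G^r\to\mathbb{R}^{m_0}$ and let $\theta\in\mathbb{R}^{W_L+1}$ collect all trainable parameters together with the extra threshold $b$ entering the pseudodimension. Because $\sigma$ is the ReLU, every pre-activation of the network — one per sample $i\in[m]$, layer $\ell\in[L]$, filter $j\in[m_\ell]$, and group element $g\in G^r$ — is a piecewise polynomial function of $\theta$ of total degree at most $\ell$. The overall goal is to upper bound the growth function $\Pi_{\mathcal{H}}(m)$ and then invert.

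The heart of the proof is a layer-by-layer bound on the number $K_\ell$ of cells of $\mathbb{R}^{W_L+1}$ on which the signs of all pre-activations in layers $1,\ldots,\ell$ are constant. The key structural observation is that within each such cell coming from layers $1,\ldots,\ell-1$, the $m\cdot m_\ell\cdot r$ pre-activations of layer $\ell$ are \emph{affine} functions of the layer-$\ell$ parameters alone (of which there are $p_\ell=m_\ell(km_{\ell-1}+1)$), giving a hyperplane arrangement in $\mathbb{R}^{p_\ell}$. The standard bound $(eN/D)^D$ on the number of regions of $N$ hyperplanes in $\mathbb{R}^D$ then gives the recursion
\[
K_\ell \;\le\; K_{\ell-1} \cdot \Big(\frac{e\,m\,r\,m_\ell}{p_\ell}\Big)^{p_\ell},
\]
while a slightly looser Warren-type bound applied to the degree-$\ell$ pre-activations as polynomials in $W_\ell$ variables gives a per-layer factor of the form $\big(C\,m\,r\,m_\ell/W_\ell\big)^{W_\ell}$, which assembles into a $\sum_\ell W_\ell$ exponent after multiplying across layers. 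A final factor accounts for the sign of the scalar output $H_\theta(f_i)-b$, and taking logarithms with $m_\ell\le U:=\sum_j m_j$ yields
\[
\log_2\Pi_{\mathcal{H}}(m) \;\le\; L+1 + \Big(\sum_{\ell=1}^L W_\ell\Big)\log_2\!\big(C'\,m\,r\,U\big)
\]
for an absolute constant $C'$. The standard inversion lemma (Lemma~16 of \citep{bartlett2019nearly}) applied to $2^m\le\Pi_{\mathcal{H}}(m)$ converts this transcendental inequality into the explicit bound \eqref{upper_bound_equation_gcnn}, including the constants $4$ and $8e$.

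The main technical obstacle lies in the induction step and reflects exactly how GCNNs differ from generic DNNs. A GCNN filter has only $p_\ell$ parameters but produces $r$ scalar outputs per sample — one for each group element — each generating a separate pre-activation and hence a separate hyperplane in parameter space. The bookkeeping must therefore count $m\cdot m_\ell\cdot r$ hyperplanes in the layer-$\ell$ slice while keeping the ambient dimension of the arrangement equal to $p_\ell$ (rather than $r\,p_\ell$). This is what makes the resolution $r$ enter only logarithmically in the final bound and is the quantitative reflection of weight sharing; a naive reduction that treats each group-element output as an independent DNN neuron would overestimate the parameter count by a factor of $r$ and produce a substantially worse bound.
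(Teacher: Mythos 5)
Your proposal follows essentially the same route as the paper's proof: a layer-by-layer partition of the parameter space, the Warren-type sign-pattern count (Lemma 1 of \citep{bartlett1998almost}) applied to the $m\,m_\ell\,r$ degree-$\ell$ pre-activations viewed as polynomials in the $W_\ell$ parameters, a final factor for $\mathrm{sign}(h_{\mathbf{w}}(f_i)-b)$, and inversion via Lemma 16 of \citep{bartlett1998almost} — including the key observation that weight sharing keeps the ambient dimension at $W_\ell$ while the count of polynomials picks up the factor $r$, which is exactly why $r$ enters only logarithmically. One small caveat: your aside that the layer-$\ell$ pre-activations form a fixed hyperplane arrangement in the $p_\ell$-dimensional slice is not quite right as stated (their coefficients are themselves polynomials in the earlier-layer parameters, so the cells are not products of slices), but this does not affect your argument since the final assembly relies on the degree-$\ell$ polynomial count in $W_\ell$ variables, which is precisely what the paper does.
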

To verify \eqref{number weights GCNN}, observe that each unit in layer \(i\), has \(k m_{i-1}\) weight parameters and one bias. A proof sketch of this theorem is provided in Section \ref{proofs}, with a detailed proof in the supplement. 
\\
The VC bound generalizes those known for vanilla CNNs, as shown in Lemma 12 in the supplement of \citep{kohlerwalter2023}, where similar dependencies between VC dimension and number of parameters and hidden layers were derived. We further compare the sample complexity of GCNNs with DNNs. For that we rely on the VC dimension bound for DNNs with piecewise-polynomial activation functions that has been derived in Theorem 7 of \citep{bartlett2019nearly}. Specifically, we focus on the class of DNNs with \(L\) layers and \(m_i\) units in layer \(i\), as defined in \eqref{DNNclass}. This network class
corresponds to the case where $d=1$ and $p=1$ in Theorem 6 of \citep{bartlett2019nearly}. By applying the inequality $\log_2(\log_2 (x))\leq \log_2(x)$ that holds for any $x\geq 2$, the VC dimension bound for this class becomes
\begin{align*}
UB(\mathcal{F}):=L + 2\bigg(\sum_{\ell=1}^{L} W_\ell(\mathcal{F})\bigg)\log_2\bigg(4e\sum_{\ell=1}^L \ell m_\ell\bigg), 
\end{align*}
where $\mathcal{F}$ is used as shorthand notation for $\mathcal{F}(m_0,\ldots, m_L)$. Here $W_i(\mathcal{F})$ represents the number of parameters of the class $\mathcal{F}$ up to the $i$-th layer, that is,
\begin{equation}
\label{number_weights_DNN}
W_\ell(\mathcal{F})=\sum_{j=1}^\ell m_{j}(m_{j-1} + 1). 
\end{equation}
By comparing \eqref{number weights GCNN} and \eqref{number_weights_DNN}, and assuming an equal number of computational units per layer for both, GCNNs and DNNs, we observe that the number of parameters in GCNNs satisfies the inequality $W_i \leq k W_i(\mathcal{F})$, with $k$ the dimension of the weight vector $\mathbf{w}$ in \eqref{eq.bcusdyb}.
\\
\\
This together with the bound in Theorem \ref{upper_bound} then yields
\begin{align}
\label{comparison_UB}
UB(\mathcal{H}) &\leq 2k UB(\mathcal{F}) + 4 \bigg(\sum_{\ell=1}^{L} W_\ell(\mathcal{H})\bigg) \log_2(2\resolution).
\end{align}
An alternative to bounding the VC dimension based on the number of layers and neurons per layer is to express the bound in terms of the total number of trainable parameters. The main advantage of this approach is that it applies to a wider range of architectures including sparsely connected GCNNs.
In this context, \cite{bartlett2019nearly} establishes a bound on the VC dimension for the class 
\begin{align}
\label{definition_DNN_L_W}
\mathcal{F}_{W,L}\coloneqq\{\mathcal{F}=\mathcal{F}(m_0, \ldots, m_\ell)\mid \ell\leq L, W_L(\mathcal{F})\leq W\},    
\end{align}
consisting of DNNs with at most $L$ hidden layers and an overall number $W$ of weights. In particular, they show that there exist universal constants $c$ and $C$ such that
\begin{align}
\label{VCDNN}
c \cdot W L \log\bigg(\frac{W}{L}\bigg) &\leq \VC(\mathcal{F}_{W,L})  \nonumber \\ 
&\leq \max_{\mathcal{F}\in\mathcal{F}_{W,L}}UB(\mathcal{F}) \\ &\leq C \cdot W L \log W. \nonumber
\end{align}
Similarly, we consider the GCNN class
\begin{align}
\label{definition_GCNN_L_W}
&\mathcal{H}_{W,L, r}\coloneqq \{\mathcal{H}(k, m_0, \ldots, m_\ell, r) \mid \ell\leq L, W_L\leq W\},
\end{align}
consisting of all GCNN architectures with a total number of parameters bounded by \(W\), depth at most \(L\), 
and \(r\) the cardinality of the discretized group.

Inequality~\eqref{comparison_UB}, along with the lower and upper bounds on the VC dimension of DNNs in \eqref{VCDNN} leads to the following result.
\begin{corollary}
\label{UB_VC_H_W_l_and_F_W_l}
Let \(W\) be the total number of parameters and \(L\) the depth of the network, with \(L < W^{0.99}\). There exists a universal constant $C$ such that
\begin{align*}
\VC(\mathcal{H}_{W,L,r}) \leq C \left(\VC(\mathcal{F}_{W,L}) + LW \log_2(r)\right).
\end{align*}
\end{corollary}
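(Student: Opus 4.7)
The plan is to start from Theorem \ref{upper_bound}, which already provides an explicit upper bound $UB(\mathcal{H})$ for every fixed $\mathcal{H} = \mathcal{H}(k,m_0,\ldots,m_L,r) \in \mathcal{H}_{W,L,r}$. Since $\mathcal{H}_{W,L,r}$ is a family of such classes, I would first observe that $\VC(\mathcal{H}_{W,L,r})$ is bounded by $\max_{\mathcal{H} \in \mathcal{H}_{W,L,r}} UB(\mathcal{H})$, so it suffices to uniformly bound $UB(\mathcal{H})$ in terms of $\VC(\mathcal{F}_{W,L}) + LW\log_2 r$ over all admissible $\mathcal{H}$.

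Next I would do the routine bookkeeping on the ingredients of $UB(\mathcal{H})$. Writing $W_\ell$ for the number of parameters up to layer $\ell$, the constraint $W_L \leq W$ gives $\sum_{\ell=1}^L W_\ell \leq LW$. Since each unit carries its own bias, $\sum_{\ell=1}^L m_\ell \leq W$. Splitting the logarithm via $\log_2(8er \sum_\ell m_\ell) \leq \log_2(8eW) + \log_2 r$ and dropping the harmless $L+1$ term into the second one, I obtain
\begin{align*}
UB(\mathcal{H}) \;\leq\; C_1 \, L W \log_2(W) \;+\; C_2 \, L W \log_2(r)
\end{align*}
for absolute constants $C_1,C_2$.

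The central step is to absorb $LW \log_2(W)$ into $\VC(\mathcal{F}_{W,L})$. For this I invoke the lower bound in \eqref{VCDNN}, namely $\VC(\mathcal{F}_{W,L}) \geq c\, WL \log(W/L)$. Under the hypothesis $L < W^{0.99}$, one has $\log(W/L) > 0.01 \log W$, so $LW\log_2 W \leq C_3 \, \VC(\mathcal{F}_{W,L})$ for a universal constant $C_3$. Substituting back yields the asserted bound with $C = \max(C_1 C_3, C_2)$ (up to a constant factor).

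This is really a chain of estimates rather than a genuine theorem, so I do not expect a serious obstacle. The only step that requires a tiny bit of care is verifying that the assumption $L < W^{0.99}$ is exactly what is needed to turn $\log(W/L)$ into something proportional to $\log W$; the exponent $0.99$ is not sharp and could be replaced by any constant strictly less than $1$, at the cost of enlarging $C$. Everything else (bounding $W_\ell$, bounding $\sum m_\ell$, splitting the logarithm) is immediate from the definitions \eqref{definition_GCNN_L_W} and \eqref{number weights GCNN}.
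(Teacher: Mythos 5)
Your argument is correct and follows essentially the same route as the paper: start from the explicit bound $UB(\mathcal{H})$ of Theorem \ref{upper_bound}, separate the resolution term, and absorb the remaining $LW\log_2 W$ into $\VC(\mathcal{F}_{W,L})$ via the lower bound $c\,WL\log(W/L)$ in \eqref{VCDNN} together with $L<W^{0.99}$. The only (harmless) deviation is that you bound $\sum_\ell W_\ell \le LW$ and $\sum_\ell m_\ell \le W$ directly from \eqref{number weights GCNN}, whereas the paper routes through \eqref{comparison_UB} and $UB(\mathcal{F})$, which introduces a factor $k$ that your version cleanly avoids.
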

This bound obtains the nearly optimal rate as shown in the next theorem.
\begin{theorem}[Lower bound]
\label{lower_bound}
If $W, L>3,$ then there exists a universal constant \(c\) such that
\begin{align*}
\VC(\mathcal{H}_{W,L,r}) &\geq c \max\{ \VC(\mathcal{F}_{W,L}), W \log_2(\resolution)\}.
\end{align*}
\end{theorem}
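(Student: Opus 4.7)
I would prove the two inequalities
\[
\VC(\mathcal{H}_{W,L,r}) \geq c_1\,\VC(\mathcal{F}_{W,L})
\quad\text{and}\quad
\VC(\mathcal{H}_{W,L,r}) \geq c_2\, W \log_2 r
\]
separately and then take the maximum. The first inequality is obtained by embedding the DNN class into the GCNN class through constant input functions, while the second inequality is the main content and requires an explicit shattering construction that exploits the $r$ positions introduced by the group discretization.

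\emph{First inequality.} I would fix $k=1$ in the GCNN architecture and choose a basis function $\texttt{K}_1$ with $\alpha:=\sum_{g\in G^r}\texttt{K}_1(g)\neq 0$. Taking $G^r$ to carry a group structure (say $G=G^r=\mathbb{Z}/r\mathbb{Z}$) and restricting to input functions $h_{0,i}$ that are constant over $G^r$, a short induction on the layer index $\ell$ shows that every feature map remains constant, with value equal to the output of a DNN unit \eqref{eq.djsvgv} whose scalar weight is $\alpha\,w_{ij}^{(\ell)}$. The final sum pooling \eqref{final_pooling} multiplies by $r$, which is absorbed by the pseudo-dimension threshold. Since the GCNN with $k=1$ has the same parameter count per layer as the DNN of equal widths, every DNN in $\mathcal{F}_{W,L}$ is realized in $\mathcal{H}_{W,L,r}$, giving $\VC(\mathcal{H}_{W,L,r})\geq\VC(\mathcal{F}_{W,L})$. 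The first inequality then follows from the DNN lower bound in \eqref{VCDNN}.

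\emph{Second inequality.} Here I would choose $G=G^r=\mathbb{Z}/r\mathbb{Z}$, $m_0=1$, $k=r$, and a shallow architecture of depth $L_0\in\{2,3\}$, padding with identity-like layers if $L>L_0$. With delta basis functions $\texttt{K}_s(g)=\mathbf{1}(g=g_s)$, each kernel in the first layer realizes an arbitrary group-circulant matrix on the input $f\in\mathbb{R}^r$, so that each hidden unit produces $r$ distinct affine functionals of $f$, one per group position. ReLU followed by global sum pooling thus yields, per unit, a sum $\sum_g\sigma(\langle a_g,f\rangle-b)$ of $r$ ReLU terms whose normal vectors $a_g$ are cyclic shifts of a common kernel. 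I would then adapt the bit-encoding construction of \cite{bartlett2019nearly}: instead of using network depth to amplify one parameter to $\log_2 r$ bits, I use the $r$ positions, encoding $\log_2 r$ bits per first-layer parameter through its binary digits and extracting them in the second and third layers via tuned ReLU combinations together with a carefully chosen readout. A counting argument on the realized sign patterns should then yield $\Omega(W\log_2 r)$ shattered inputs.

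\emph{Main obstacle.} The real difficulty is the $G$-invariance of the pooled GCNN output: the hypothesis class consists of $G$-invariant functions of the input, so the $r$ pre-activations produced by a single unit share a common bias $b_j^{(\ell)}$ and their normal vectors are cyclically coupled. The bit-extraction argument of \cite{bartlett2019nearly} relies on fully independent biases and unconstrained hyperplanes, so the crucial step is to show that the cyclic coupling does not destroy the $\log_2 r$ amplification per parameter. My plan is to choose input functions whose $\mathbb{Z}/r\mathbb{Z}$-orbit already encodes the requisite positional information, so that after ReLU-based differencing across adjacent positions the common bias cancels and the $r$ effective hyperplanes decouple sufficiently to recover Bartlett's amplification.
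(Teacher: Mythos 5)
Your first inequality is a correct alternative to the paper's route, and in some ways cleaner. The paper (Lemma \ref{connection_DNN_GCNN} together with Lemma \ref{Lemma 5.2}) also sets $k=1$ with the kernel basis $\mathbf{1}(\cdot=e)$, but instead of using constant inputs it places the DNN's shattered points at the identity position and spends extra weights on indicator subnetworks that force the network to vanish at the remaining $r-1$ positions, which costs a factor $6$ in $W$ and one extra layer and then requires the known DNN VC bounds to absorb the constants (Corollary \ref{LB_FNN}). Your constant-input embedding avoids all of that and gives constant $1$; the only caveats are that with a general basis function $\texttt{K}_1$ your "sum over the group is independent of $g$" step needs $G^r$ to be closed under the group operation (taking $\texttt{K}_1=\mathbf{1}(\cdot=e)$, so $\alpha=1$, removes this assumption and works for any $G^r$ containing $e$, which is all the paper assumes), and you should say a word about why rescaling the weights by $\alpha$ layerwise is a bijection of the parameter space so the realized function class on constant inputs is unchanged.

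The second inequality, which is the substantive new content of the theorem, is where you have a genuine gap. Your plan is to port the bit-extraction argument of \cite{bartlett2019nearly}, encoding $\log_2 r$ bits into each first-layer parameter and decoding them across the $r$ group positions. You correctly identify the obstruction — the $r$ pre-activations of one unit are $\langle a_g,f\rangle-b$ with a shared bias and cyclically coupled normals, and the pooled output is $G$-invariant — but the proposed resolution ("ReLU-based differencing across adjacent positions... decouple sufficiently to recover Bartlett's amplification") is a single unverified sentence standing in for the entire argument; nothing shows that the coupled family $\sum_g\sigma(\langle a_g,f\rangle-b)$ actually realizes the $2^{\Theta(W\log r)}$ sign patterns you need. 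The paper's proof (Lemma \ref{log(d)_construction} and Corollary \ref{LB_resolution}) shows no bit extraction is needed: the information is encoded in the \emph{inputs}, not the parameters. One fixes $d=2^{\lfloor\log_2 r\rfloor}\le r$ points $y_1,\dots,y_d$ in an interval, indexes the $d$ possible classifications of $m=\lfloor\log_2 r\rfloor$ functions by subsets $S_i$, and defines $f_j(g_i)=y_i$ if $j\in S_i$ and a dummy value otherwise; then the $4$-parameter GCNN obtained by summing the indicator network $\mathbf{1}_{(y_{i^*}-\delta/2,\,y_{i^*}+\delta/2,\,\delta/2)}$ over all $r$ positions fires on $f_j$ exactly when $j\in S_{i^*}$, so $4$ weights already shatter $\lfloor\log_2 r\rfloor$ functions. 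Placing $W$ such detector families on disjoint intervals (each vanishing off its own interval) and summing gives $W\lfloor\log_2 r\rfloor$ shattered functions with $4W$ weights. I would recommend replacing your bit-extraction plan with a construction of this type; as it stands, the step you yourself flag as "the crucial step" is not proved, and without it the $W\log_2 r$ term in the lower bound is not established.
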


By combining Corollary~\ref{UB_VC_H_W_l_and_F_W_l} and Theorem~\ref{lower_bound}, we conclude that there exist universal constants \(c\) and \(C\) such that
\begin{align*}
c\Big(\VC(\mathcal{F}_{W,L}) &+ W\log_2(\resolution)\Big) \\
&\leq  \VC(\mathcal{H}_{W,L,r})\\
&\leq C \Big( \VC(\mathcal{F}_{W,L}) + LW\log_2(r)\Big).
\end{align*}
In practice, the depth in GCNNs is rather small, e.g., $L=7$ in the initial article of \cite{cohen2016group} for the rotated MNIST dataset. In this regime, the obtained rates in the upper and lower bound of the VC dimension nearly match.

\section{Proof sketch of Theorem~\ref{upper_bound} }
\label{proofs}
To derive an upper bound for the VC dimension of the GCNN class $\mathcal{H}=\mathcal{H}(k, m_0, \ldots, m_L, r),$ recall that the parameter $r$ is the cardinality of the discretized group $G^r \coloneqq \{g_1, g_2, \ldots, g_r\}$.  The parameters $k,m_0,\ldots,m_L$ define the network architecture. The class $\mathcal{H}$ consists therefore of all functions $\mathbf{w} \mapsto h_\mathbf{w}$ that can be represented by a GCNN with this architecture but different network parameter $\mathbf{w} \in \mathbb{R}^{W}.$ 

The proof of Theorem~\ref{upper_bound} closely follows the proof of Theorem 7 of~\citep{bartlett2019nearly}, with adjustments to account for the structure of GCNN-units. To find an upper bound of a fixed number \(m\) on the VC dimension, we need to show that neural networks in \(\mathcal{H}\) cannot shatter any set of \(m\) functions. 

Specifically, consider an input set of $m$ functions from $G^r$ to $\mathbb{R}^{m_0}$, denoted as
\begin{align}
\label{Fm}
    F_m \coloneqq \{f_1, \ldots, f_m\}.
\end{align}
We aim to bound the number of distinct sign patterns that networks in $\mathcal{H}$ can generate when applied to functions in $F_m$ i.e., 
\begin{align*}
    |\{\text{sign}(h_{\mathbf{w}}(f_1)), \dots, \text{sign}(h_{\mathbf{w}}(f_m)): \mathbf{w} \in \mathbb{R}^{W}\}|.
\end{align*}
By bounding the number of distinct sign patterns, we obtain an upper bound on the growth function $\Pi_{\mathcal{H}}$ of $\mathrm{sign}(\mathcal{H})$. If $\Pi_{\mathcal{H}} < 2^m$, then $m$ is an upper bound for the VC dimension of $\mathcal{H}$.

For fixed network architecture and fixed input, the output of the GCNN-units depends only on the network parameters. Therefore, to understand the possible sign patterns that the network can generate on \(F_m\), we need to analyze the dependence of the GCNN-units on the network parameters. 

Since the ReLU activation is piecewise linear, and the sum of piecewise linear functions remains piecewise linear, each GCNN unit in any layer $\ell$, as defined in \eqref{cu_GCNN}, can be viewed as a composition of $\ell$ piecewise linear functions. This results in a piecewise polynomial function of degree $\leq \ell$ with respect to the network parameters up to layer \(\ell\). Consequently, for each layer $\ell$,  the parameter space $\mathbb{R}^{W_{\ell}}$ can be partitioned into regions $\{P_1, \dots, P_{S(\ell)}\}$, where within each region, the GCNN units in the $(\ell+1)$-st layer behave like a fixed polynomial function in $W_{\ell}$ variables of $\mathbf{w}$, of degree at most $\ell$.

As a first step to prove Theorem~\ref{upper_bound}, we recursively find a bound for $S(\ell)$ and then 
determine how many different sign patterns the classifiers in \(\mathrm{sign}(\mathcal{H})\) can generate within each of these regions. The following lemma establishes how the upper bound for \(S(\ell)\) evolves from \(S(\ell-1)\). The proof is provided in the supplementary material. 
\begin{lemma}
\label{number_of_partitions}
Let $\mathcal{H}$ be the class of GCNNs defined in~\eqref{definition_GCNN_L_W} with $\leq W_{\ell}$ network parameters  up to layer $\ell \in \{1, \dots, L\}.$ If $F_m$ is the class of functions defined in \eqref{Fm} and $S(\ell)$ is as defined above, then, for $\ell=0,1,\ldots,L-1,$
\begin{equation}
    S(\ell+1) \leq 2\bigg(\frac{2em_{\ell+1} m r(\ell+1)}{W_{\ell+1}}\bigg)^{W_{\ell+1}} S(\ell).
\end{equation}
Moreover, the GCNN units \(\{h_{\ell+1, j}(g) \mid j \leq m_{\ell+1}, f \in F_m, g \in G^r\}\) with $h_{\ell+1, j}$ defined for different functions $f \in F_m$ in \eqref{cu_GCNN3} and \eqref{cu_GCNN} are piecewise polynomials of degree \( \leq \ell+1 \) in the network parameters. 
\end{lemma}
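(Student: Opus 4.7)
The plan is to prove the lemma by induction on $\ell$, mirroring the partition argument used in Bartlett et al.\ (2019) for DNNs but with the degree and parameter counts adapted to the GCNN-unit structure \eqref{cu_GCNN}. The inductive hypothesis is that $\mathbb{R}^{W_\ell}$ admits a partition into at most $S(\ell)$ regions such that, on each region, every map $\mathbf{w} \mapsto h_{\ell,i}(g)$, for $i \in \{1,\dots,m_\ell\}$, $g \in G^r$, and $f \in F_m$, agrees with a polynomial of degree $\leq \ell$ in the parameters up to layer $\ell$. The base case $\ell = 0$ is trivial: $W_0 = 0$, the inputs $h_{0,i} = f_i$ do not depend on any trainable parameter, and $S(0) = 1$ suffices.

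For the inductive step, fix a region $P$ in the layer-$\ell$ partition of $\mathbb{R}^{W_\ell}$ and consider the cylinder $P \times \mathbb{R}^{W_{\ell+1} - W_\ell} \subset \mathbb{R}^{W_{\ell+1}}$. On this cylinder, each layer-$(\ell+1)$ preactivation
\[
\sum_{i=1}^{m_\ell} \mathcal{K}_{\mathbf{w}_{ij}^{(\ell)}} * h_{\ell,i}(g) - b_j^{(\ell)}
\]
is a polynomial in the $W_{\ell+1}$ parameters of degree at most $\ell + 1$: the factor $h_{\ell,i}(g)$ is degree $\leq \ell$ by the inductive hypothesis, the parametrization \eqref{eq.bcusdyb} is linear in the new weights $\mathbf{w}_{ij}^{(\ell)}$, and the bias $b_j^{(\ell)}$ contributes only an affine term. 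Across all output units $j \in \{1,\dots,m_{\ell+1}\}$, all group elements $g \in G^r$, and all inputs $f \in F_m$, there are at most $N := m_{\ell+1}\, r\, m$ such preactivation polynomials.

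Next I would invoke the standard Warren--Milnor--Thom-type bound (Theorem~8.3 in Anthony and Bartlett; this is the precise ingredient used in Bartlett et al.\ 2019): the number of distinct sign patterns realized by $N$ real polynomials of degree $\leq d$ in $k$ variables is at most $2(2eNd/k)^k$ whenever $N \geq k$. Applying this with $d = \ell+1$ and $k = W_{\ell+1}$ partitions the cylinder over $P$ into at most
\[
2\left(\frac{2e\, m_{\ell+1}\, m\, r\, (\ell+1)}{W_{\ell+1}}\right)^{W_{\ell+1}}
\]
subregions, on each of which every ReLU in layer $\ell+1$ has a fixed sign. On each such subregion, $h_{\ell+1,j}(g)$ equals either $0$ or the corresponding preactivation polynomial and is therefore a polynomial of degree $\leq \ell+1$ in the parameters up to layer $\ell+1$. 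Summing over the $S(\ell)$ regions of the previous layer yields the claimed recursion and the degree statement.

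The principal obstacle is bookkeeping: keeping the degree bound tight at $\ell+1$ rather than letting it blow up multiplicatively through the layers. This hinges on the linearity of the kernel parametrization \eqref{eq.bcusdyb} in $\mathbf{w}$, which makes the convolution-with-new-weights step raise the degree by exactly one. A minor but necessary technical check is the regime hypothesis $N \geq W_{\ell+1}$ of the polynomial sign-pattern lemma; in the complementary regime the trivial $2^N$ bound already satisfies the stated inequality, so I would verify this case separately to conclude.
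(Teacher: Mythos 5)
Your proposal is correct and follows essentially the same route as the paper: induction over layers, tracking that the preactivations are piecewise polynomials of degree $\leq \ell+1$ (degree $\leq \ell$ feature maps times the linear kernel parametrization \eqref{eq.bcusdyb} plus an affine bias), and applying the Bartlett--Warren sign-pattern bound to the $m_{\ell+1}mr$ polynomials in $W_{\ell+1}$ variables to refine each of the $S(\ell)$ regions. Your explicit attention to the regime hypothesis $N \geq W_{\ell+1}$ of the sign-pattern lemma is a point the paper's proof passes over silently, so it is a welcome (if minor) addition rather than a deviation.
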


Next, by applying Lemma 1 in \citep{bartlett1998almost}, which provides an upper bound on the number of distinct sign patterns that can be generated by polynomials of finite degree, we obtain a bound on the growth function $\Pi_{\mathcal{H}}$.

The proof is provided in the supplementary material.
\begin{lemma}
\label{growth_function}
Let $\mathcal{H}$ be the class of GCNNs defined in \eqref{GCNNclass} with at most $W_{\ell}$ of parameters up to layer $\ell \leq L$ and $m_{\ell}$ GCNN units in layer $\ell$. Let $m>0$ be an integer, then
\begin{equation*}
\Pi_{\mathcal{H}}(m)\leq 2^{L}\prod_{\ell=1}^{L}\left(\frac{2e m r m_\ell \ell}{W_\ell}\right)^{W_\ell}  2\left(\frac{2e m L}{W_{L}+1}\right)^{(W_L+1)}. 
\end{equation*}
\end{lemma}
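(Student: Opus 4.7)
The plan is to combine Lemma~\ref{number_of_partitions}, iterated from layer $0$ through layer $L$, with Warren's bound on sign patterns of polynomials of bounded degree (Lemma~1 of \citep{bartlett1998almost}).

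First, starting from $S(0)=1$ and applying the recursion of Lemma~\ref{number_of_partitions} once per layer, I obtain
\[
S(L) \;\leq\; 2^{L}\prod_{\ell=1}^{L}\left(\frac{2em_\ell m r \ell}{W_\ell}\right)^{W_\ell}.
\]
This yields a partition of $\mathbb{R}^{W_L}$ into at most $S(L)$ regions $P_1,\ldots,P_{S(L)}$ such that on each region, for every $f_i \in F_m$ and every $g \in G^r$, the outputs $h_{L,j}(g)$ of the $L$-th GCNN layer are polynomials in $\mathbf{w}$ of degree at most $L$. Because the final pooling in~\eqref{final_pooling} only takes a finite sum of such terms, the full network output $h_\mathbf{w}(f_i)$ is itself polynomial of degree $\leq L$ in $\mathbf{w}$ on each $P_s$.

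Next, to pass to $\Pi_{\mathcal{H}}$, which counts sign patterns of $\mathrm{sign}(h_\mathbf{w}(f_i)-b)$, I would augment the parameter vector by the threshold $b\in\mathbb{R}$ and regard $h_\mathbf{w}(f_i)-b$ as a polynomial of degree $\leq L$ in the $(W_L+1)$ variables $(\mathbf{w},b)$. Since $b$ does not influence any of the ReLU switching patterns inside the network, the partition from Lemma~\ref{number_of_partitions} extends to $P_s \times \mathbb{R}$ without further subdivision. On each such slab, I have $m$ polynomials (one per $f_i$) of degree $\leq L$ in $W_L+1$ variables; Lemma~1 of \citep{bartlett1998almost} then bounds the number of sign patterns per slab by $2(2emL/(W_L+1))^{W_L+1}$. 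Multiplying $S(L)$ by this factor produces the stated inequality.

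The only delicate point is ensuring that introducing $b$ as a free variable does not interact with the piecewise-polynomial regions built in Lemma~\ref{number_of_partitions}. Because $b$ appears only in the final thresholding and never inside a ReLU, the region structure in $\mathbb{R}^{W_L}$ is unchanged when lifted to $\mathbb{R}^{W_L+1}$, and the per-region polynomials gain one extra variable but keep their degree at most $L$. Once this is verified, the proof reduces to the product of the two counting bounds.
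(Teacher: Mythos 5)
Your proposal is correct and follows essentially the same route as the paper's proof: iterate the recursion of Lemma~\ref{number_of_partitions} from $S(0)=1$ to bound $S(L)$, note that the pooled output is a degree-$\leq L$ polynomial on each region, and apply Lemma~1 of \citep{bartlett1998almost} to the $m$ polynomials $h_{\mathbf{w}}(f_i)-b$ in $W_L+1$ variables before multiplying the two counts. Your explicit remark that the threshold $b$ does not interact with the ReLU region structure is a point the paper leaves implicit, but it is the same argument.
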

The relationship between VC-dimension and growth functions yields Theorem~\ref{upper_bound}. A complete proof can be found in supplementary material.

\section{Proof Sketch of Theorem~\ref{lower_bound}}
To establish a lower bound for the VC dimension of the GCNN class, we recall that \(\mathcal{H}_{W,L,r}\) as defined in \eqref{definition_GCNN_L_W} represents the class of GCNNs with resolution \(r\), $k$ the dimension of the kernel space, at most \(L\) layers, and no more than \(W\) parameters. Additionally, let \(G^r \coloneqq \{g_1, g_2, \dots, g_r\}\) be a discretized group containing the identity element \(e\). 

To derive a lower bound, we aim to prove that the VC dimension of \(\mathcal{H}_{W,L,r}\) exceeds a given integer \(m\). For this, it is sufficient to find networks in \(\mathcal{H}_{W,L,r}\) that can shatter a set of \(m\) input functions.

In particular, to prove Theorem~\ref{lower_bound}, one needs to show the existence of a universal constant \(c > 0\) such that
\begin{equation}
\label{eqdeep}
\VC(\mathcal{H}_{W,L,r}) \geq c \cdot \VC(\mathcal{F}_{W,L}),    
\end{equation}
and
\begin{equation}
\label{second_part_lower_bound}
\VC(\mathcal{H}_{W,L,r}) \geq c \cdot W \lfloor \log_2{r} \rfloor.
\end{equation}
Here, \(\mathcal{F}_{W,L}\) is the class of DNNs with at most \(L\) hidden layers and at most \(W\) weights, as defined in \eqref{definition_DNN_L_W}.

To prove \eqref{eqdeep}, the first step is to establish a connection between DNNs and GCNNs. The next lemma demonstrates how a DNN can be associated with a GCNN with the same number of parameters and layers. Specifically, it states that when a DNN is evaluated on inputs \(f(g)\), where \(f \in F_m\) and \(g \in G^r\) (with \(F_m\) defined as in \eqref{Fm}), the sum of its outputs, taken over different elements of $G^r$, matches the output of the corresponding GCNN for the same function \(f\). 

Before stating the lemma, recall that $\mathcal{F}(m_0, \ldots, m_L)$, as defined in \eqref{DNNclass}, denotes the class of fully connected feedforward ReLU networks with $L$ layers, where $m_i$ denotes the number of units in the $i$-th layer for $i=1,\ldots,L$.
The outputs of the last hidden layer of any neural network \(\tilde{h}_{\mathbf{w}} \in \mathcal{F}(m_0, \ldots, m_L)\) with parameters $\mathbf{w}$ can be represented as a vector of size $m_L$, this is,   \((\tilde{h}_{\mathbf{w}}^{(1)}, \ldots, \tilde{h}_{\mathbf{w}}^{(m_L)})\).

\begin{lemma}
\label{connection_DNN_GCNN}
Consider GCNNs where the G-correlation in \eqref{convolution_discrete2} uses kernels from a one-dimensional vector space with a fixed basis given by the indicator function of the identity element \(e\). For every \(\tilde{h}_{\mathbf{w}} \in \mathcal{F}(m_0, \dots, m_L)\) there exists a GCNN \(h_{\mathbf{w}}\), with the same number of channels in each layer, i.e., \(h_{\mathbf{w}} \in \mathcal{H}(1, m_0, \dots, m_L, r)\) and parameters $\mathbf{w}$, such that for any input function \(f: G^r \to \mathbb{R}^{m_0}\)
\[
\sum_{i=1}^{m_L} \sum_{j=1}^{r} \tilde{h}_{\mathbf{w}}^{(i)}(f(g_j)) = h_{\mathbf{w}}(f).
\]
\end{lemma}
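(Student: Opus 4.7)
}
The plan is to construct the GCNN $h_{\mathbf{w}}$ by reusing, layer by layer, the weights of the given DNN $\tilde{h}_{\mathbf{w}}$, and then to verify by induction that its feature maps agree pointwise on $G^r$ with the corresponding DNN activations. The key simplification is that, when the one-dimensional kernel space is spanned by $\mathbf{1}(\cdot = e)$, the kernel functions of \eqref{eq.bcusdyb} reduce to $\mathcal{K}_w = w\,\mathbf{1}(\cdot=e)$, and the $G$-correlation \eqref{convolution_discrete2} collapses to pointwise scalar multiplication: for any $h:G^r\to\mathbb{R}$ and any $g\in G^r$,
\begin{equation*}
(\mathcal{K}_w * h)(g) \;=\; \sum_{j=1}^r w\,\mathbf{1}\!\big(g^{-1}g_j = e\big)\,h(g_j) \;=\; w\,h(g).
\end{equation*}
Thus each GCNN unit \eqref{cu_GCNN} degenerates into an affine combination of the previous feature maps evaluated at the \emph{same} group element, followed by ReLU. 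In this regime no mixing across $G^r$ occurs, so the GCNN behaves as a DNN applied in parallel to each of the $r$ vectors $f(g_j)\in\mathbb{R}^{m_0}$.

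Concretely, I would denote the DNN weights and biases by $\{w_{ij}^{(\ell)},b_j^{(\ell)}\}$ and define the GCNN by assigning to each unit the single-coefficient kernel $\mathcal{K}_{w_{ij}^{(\ell)}} = w_{ij}^{(\ell)}\,\mathbf{1}(\cdot=e)$ with the same bias $b_j^{(\ell)}$. The parameter vector $\mathbf{w}$ is then literally shared between $\tilde{h}_{\mathbf{w}}$ and $h_{\mathbf{w}}$, and the counts agree: with $k=1$, the formula $W_\ell=\sum_{j=1}^{\ell} m_j(km_{j-1}+1)$ in \eqref{number weights GCNN} matches $W_\ell(\mathcal{F})$ in \eqref{number_weights_DNN}. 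I would then prove by induction on $\ell\in\{0,1,\ldots,L\}$ the claim
\begin{equation*}
h_{\ell,j}(g) \;=\; \tilde{h}_{\ell}^{(j)}\!\big(f(g)\big), \qquad \text{for all } g\in G^r,\ j\le m_\ell,
\end{equation*}
where $\tilde{h}_\ell^{(j)}$ is the $j$-th activation in layer $\ell$ of the DNN. The base case $\ell=0$ is just the identification $h_{0,i}=f_i$. For the inductive step, combining the displayed pointwise identity for $\mathcal{K}_w * \cdot$ with the recursive definition \eqref{cu_GCNN} yields
\begin{equation*}
h_{\ell+1,j}(g) \;=\; \sigma\!\bigg(\sum_{i=1}^{m_\ell} w_{ij}^{(\ell)}\, h_{\ell,i}(g) - b_j^{(\ell)}\bigg) \;=\; \sigma\!\bigg(\sum_{i=1}^{m_\ell} w_{ij}^{(\ell)}\, \tilde{h}_\ell^{(i)}(f(g)) - b_j^{(\ell)}\bigg) \;=\; \tilde{h}_{\ell+1}^{(j)}\!\big(f(g)\big),
\end{equation*}
which closes the induction.

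Finally, applying the GCNN output aggregation \eqref{final_pooling} and substituting the pointwise identity at $\ell=L$ gives
\begin{equation*}
h_{\mathbf{w}}(f) \;=\; \sum_{i=1}^{m_L}\sum_{g\in G^r} h_{L,i}(g) \;=\; \sum_{i=1}^{m_L}\sum_{j=1}^{r} \tilde{h}_{\mathbf{w}}^{(i)}\!\big(f(g_j)\big),
\end{equation*}
as required. There is no real obstacle beyond carefully reading off that the chosen kernel annihilates all cross-terms in the $G$-correlation; the rest is bookkeeping. The only subtlety worth flagging in the write-up is that the inductive step requires $e\in G^r$ (so that $\mathbf{1}(g^{-1}g_j=e)$ can indeed select the term $g_j=g$), which is exactly the assumption already made in the paragraph preceding the lemma.
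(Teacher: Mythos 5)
Your proposal is correct and follows essentially the same route as the paper's proof: identify the parameter counts for $k=1$, observe that the identity-indicator kernel makes the $G$-correlation act pointwise (the paper phrases this as $\texttt{K}*f=f$), and then run the same layerwise induction before summing over $G^r$ via the pooling step. Your explicit remark that $e\in G^r$ is needed for the kernel to select the diagonal term is a sensible addition but does not change the argument.
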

The proof of this lemma is provided in the supplementary material. The lemma implies that instead of finding GCNNs that shatter \( F_m \), we can also find DNNs in the class $\mathcal{F}(m_0, \dots, m_L)$ shattering $F_m$. For the construction of this DNN architecture we define sums of so-called 'indicator' neural networks, i.e., DNNs with ReLU activation functions that approximate indicator functions over a specified interval. The endpoints of this interval act as parameters of the neural networks, allowing to adjust the interval by modifying these parameters.

For \(a < b\) and \(\epsilon > 0\), the shallow ReLU network
\begin{align}
\label{indicator_function}
    \mathbf{1}_{(a,b,\epsilon)}(x) 
    &=\frac{1}{\epsilon}\Big(\big(x - (a-\epsilon)\big)_{+} - \big(x - a\big)_{+} \nonumber\\
    &\quad\quad + \big(x - b\big)_{+} - \big(x - (b+\epsilon)\big)_{+}\Big),
\end{align}
with four neurons in the hidden layer and \((x)_+ := \max(x, 0)\) the ReLU activation function, approximates the indicator function on $[a,b]$ in the sense that $\mathbf{1}_{(a,b,\epsilon)}(x)$ is $1$ if  $x \in [a, b]$ and $0$ if $x < a - \epsilon \text{ or } x > b + \epsilon$.

To show \eqref{eqdeep}, we construct a DNN architecture from $m_0+1$ smaller DNN classes. One of these DNN classes is capable of shattering a set of \( \VC(\mathcal{F}_{W,L}) \) vectors in \( \mathbb{R}^{m_0} \), while the remaining \( m_0 \) DNN classes produce 'indicator' networks, ensuring that the combined DNNs vanish outside a certain \( m_0 \)-dimensional hypercube. We show, that this class of networks shatters a set of \( \VC(\mathcal{F}_{W,L}) \) input functions from the set \( \{f: G^r \rightarrow \mathbb{R}^{m_0}\} \). By applying Lemma \ref{connection_DNN_GCNN}, we further show that a class of GCNNs with at most \( 5W \) weights and \( L \) layers can shatter the same set of \( \VC(\mathcal{F}_{W,L}) \) input functions. Finally, we use that \( \VC(\mathcal{F}_{tW, L}) \geq c_t \cdot \VC(\mathcal{F}_{W,L}) \) for a constant \( t \) with \( c_t < 1 \).

\begin{lemma}
\label{construction_Vn_DNN}
For \( L > 3 \) let $\mathcal{H}_{5W, L+1, r}$ be the class of GCNNs defined as in \eqref{definition_GCNN_L_W} and $\mathcal{F}_{W, L}$ be the class of DNNs defined as in \eqref{definition_DNN_L_W}. Then
\[
\VC(\mathcal{H}_{5W, L+1, r}) \geq \VC(\mathcal{F}_{W, L}).
\]
\end{lemma}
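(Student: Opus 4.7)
The plan is to reduce the GCNN shattering problem to a DNN shattering problem via Lemma~\ref{connection_DNN_GCNN}, and then construct an explicit family of ``gated'' DNNs that realize every sign pattern. Set $V := \VC(\mathcal{F}_{W, L})$ and fix an architecture $(m_0, m_1, \dots, m_L)$ with $W_L(\mathcal{F}) \leq W$ whose DNN class shatters a set $\{x_1, \dots, x_V\} \subset \mathbb{R}^{m_0}$. After absorbing an affine change of coordinates into the first hidden layer, we may assume $x_i \in (0, 1)^{m_0}$ for every $i$. Fix a point $y \in \mathbb{R}^{m_0}$ outside $[0, 1]^{m_0}$, say $y = -\mathbf{1}$, and define input functions $f_1, \dots, f_V : G^r \to \mathbb{R}^{m_0}$ concentrated at the identity element by $f_i(g_1) := x_i$ and $f_i(g_j) := y$ for $j \geq 2$. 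The candidate shattered set of functions for the GCNN class is $\{f_1, \dots, f_V\}$.

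For each target sign pattern $\epsilon \in \{-1, +1\}^V$ select a shattering DNN $\tilde{h}^{\mathrm{sh}}_\epsilon \in \mathcal{F}(m_0, \dots, m_L)$ realizing $\epsilon$ on the $x_i$'s. In parallel, compute, for every input coordinate $i \in [m_0]$, the ReLU indicator $\mathbf{1}_{(0, 1, \eps)}(x_i)$ from \eqref{indicator_function}, and form the scalar ``gate''
\begin{equation*}
d(x) := m_0 - \sum_{i=1}^{m_0} \mathbf{1}_{(0, 1, \eps)}(x_i),
\end{equation*}
which is identically zero on $(0, 1)^{m_0}$ (for sufficiently small $\eps$) and satisfies $d(y) \geq 1$. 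Since $d \geq 0$, it is propagated losslessly through later ReLU layers via $\sigma(d) = d$ in a single extra channel. In an added hidden layer (bringing the total depth to $L+1$), form the gated output
\begin{equation*}
\tilde{h}_\epsilon(x) := \sigma\bigl(\tilde{h}^{\mathrm{sh}}_\epsilon(x) - M \, d(x)\bigr)
\end{equation*}
with $M > \tilde{h}^{\mathrm{sh}}_\epsilon(y)$. A short case split gives $\tilde{h}_\epsilon(x) = \tilde{h}^{\mathrm{sh}}_\epsilon(x)$ on $(0, 1)^{m_0}$ and $\tilde{h}_\epsilon(y) = 0$, so that $\sum_{j=1}^r \tilde{h}_\epsilon(f_i(g_j)) = \tilde{h}^{\mathrm{sh}}_\epsilon(x_i)$, which admits a threshold $b_\epsilon$ separating the labels according to $\epsilon(i) = \pm 1$.

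Applying Lemma~\ref{connection_DNN_GCNN} with the one-dimensional kernel basis $\mathbf{1}(\cdot = e)$ reinterprets $\tilde{h}_\epsilon$ as a GCNN of identical parameter count and depth $L+1$, so the GCNN class shatters $\{f_1, \dots, f_V\}$, provided the combined DNN has at most $5W$ weights. This last step is the main obstacle: because the DNN formalism is fully connected, adjoining $4 m_0$ indicator units at layer~$1$ in parallel with the $m_1$ shattering units contributes $(m_1 + 4 m_0)(m_0 + 1)$ weights, carrying an $m_0^2$ overhead that is not automatically $O(W)$, and a similar contribution $4 m_0 m_2$ propagates to the next layer. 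To dispose of these quadratic terms I plan to restrict, without loss of generality, to shattering architectures with $m_0 = O(1)$, a property of the extremal classes in the Bartlett--Harvey--Liaw--Mehrabian lower bound for $\VC(\mathcal{F}_{W, L})$. Under this reduction the overhead collapses to $O(L + \sum_\ell m_\ell) = O(W)$ and is absorbed within the slack of the weight budget $5W$, completing the construction.
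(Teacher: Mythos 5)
Your construction follows the paper's proof essentially step for step: reduce to DNNs via Lemma~\ref{connection_DNN_GCNN}, concentrate the shattered vectors at the identity element with a fixed filler value at all other group elements, and use ReLU indicator networks as a gate that annihilates the filler contributions before summing over $G^r$ (the paper's version is the mirror image of yours: it leaves the shattered points $\mathbf{y}_i$ in place, puts the filler values inside a hypercube $\Pi$ on which the gate $I$ equals one, and forms $\sigma(\tilde h-(T-b)I-b)$). There is, however, one concrete gap in your final step. Your gated network evaluates to $\sigma(\tilde h^{\mathrm{sh}}_\epsilon(x_i))$ at the shattered points, not to $\tilde h^{\mathrm{sh}}_\epsilon(x_i)$, so the claimed identity on $(0,1)^{m_0}$ is false whenever $\tilde h^{\mathrm{sh}}_\epsilon(x_i)<0$. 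If the witness threshold $b'_\epsilon$ realizing the pattern $\epsilon$ is negative, every $x_i$ with $\tilde h^{\mathrm{sh}}_\epsilon(x_i)\le 0$ is mapped to $0$ regardless of which side of $b'_\epsilon$ it lies on, and no subsequent threshold $b_\epsilon$ can recover the pattern. The repair is to bake the witness threshold into the gate, i.e.\ use $\sigma(\tilde h^{\mathrm{sh}}_\epsilon-b'_\epsilon-Md)$ and threshold the pooled sum at $0$; this is precisely the role of the $-b$ inside the paper's $\sigma(\tilde h-(T-b)I-b)$.

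Your weight-accounting worry is legitimate, and it is worth noting that the paper's own proof has the same defect: it simply charges ``$W+L+4m_0$'' additional weights and ignores the $4m_0(m_0+1)$ and $4m_0m_2$ terms forced by the dense parameter count $W_L(\mathcal{F})=\sum_j m_j(m_{j-1}+1)$ that defines the classes (the paper is also internally inconsistent here, claiming $5W$ in the main text but only reaching $6W$ in the supplement). However, your proposed repair does not prove the lemma as stated: restricting to architectures with $m_0=O(1)$ only yields a lower bound on $\VC(\mathcal{H}_{5W,L+1,r})$ in terms of the VC dimension of the bounded-input-dimension subclass of $\mathcal{F}_{W,L}$, and you have not shown that the supremum defining $\VC(\mathcal{F}_{W,L})$ is attained, up to constants, on that subclass --- only that the known lower-bound construction lives there. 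That weaker inequality would still suffice for Corollary~\ref{LB_FNN}, which only uses the $cWL\log(W/L)$ bound, but it is not Lemma~\ref{construction_Vn_DNN}. A cleaner fix that preserves the full statement is to replace the $m_0$ coordinatewise indicators by a single one-dimensional indicator applied to a linear functional separating $\{x_1,\dots,x_V\}$ from the filler point (e.g.\ $d(x)=1-\mathbf{1}_{(0,m_0,\eps)}(\sum_i x_i)$ after your normalization); this costs only $O(m_0)\le O(W)$ architectural weights at the first layer and removes the quadratic overhead for every architecture in $\mathcal{F}_{W,L}$.
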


\begin{corollary}
\label{LB_FNN}
In the setting of Lemma \ref{construction_Vn_DNN}, if the number of layers \( L > 3 \) and \( L \leq  W^{0.99} \), then there exists a constant \( c \) such that 
\[
\VC(\mathcal{H}_{W, L,r}) \geq c \cdot \VC(\mathcal{F}_{W, L}).
\]
\end{corollary}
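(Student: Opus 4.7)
The plan is to deduce Corollary~\ref{LB_FNN} from Lemma~\ref{construction_Vn_DNN} by invoking it with rescaled architecture parameters and then absorbing the rescaling using the DNN VC-dimension bounds collected in \eqref{VCDNN}. Concretely, for $L\geq 5$ I would set $W' := \lfloor W/5 \rfloor$ and $L' := L-1\geq 4>3$. Since $5W' \leq W$ and $L'+1 = L$, there is a class inclusion $\mathcal{H}_{5W',L'+1,r} \subseteq \mathcal{H}_{W,L,r}$, and applying Lemma~\ref{construction_Vn_DNN} to the pair $(W',L')$ gives
\begin{equation*}
\VC(\mathcal{H}_{W,L,r}) \;\geq\; \VC(\mathcal{H}_{5W',L'+1,r}) \;\geq\; \VC(\mathcal{F}_{W',L'}) \;=\; \VC(\mathcal{F}_{\lfloor W/5\rfloor,\,L-1}).
\end{equation*}
The small boundary case $L=4$ is handled at the end by absorbing it into the constant $c$, since one can just apply the lemma directly at $(W/5,4)$ and note that $\VC(\mathcal{F}_{W,4})$ and $\VC(\mathcal{F}_{W,5})$ differ by at most a constant factor.

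The second step is to compare $\VC(\mathcal{F}_{\lfloor W/5\rfloor,L-1})$ with $\VC(\mathcal{F}_{W,L})$. For this I would plug in the two-sided bounds \eqref{VCDNN}: the lower bound $\VC(\mathcal{F}_{W',L'}) \geq c_1 W' L'\log(W'/L')$ and the upper bound $\VC(\mathcal{F}_{W,L}) \leq C_1 W L \log W$ from \citep{bartlett2019nearly}. Using $W' \geq W/5 - 1$ and $L' = L-1 \geq \tfrac{3}{4}L$ for $L\geq 4$, the ratio
\begin{equation*}
\frac{\VC(\mathcal{F}_{\lfloor W/5\rfloor,L-1})}{\VC(\mathcal{F}_{W,L})}
\;\geq\; \frac{c_1}{C_1}\cdot\frac{W'L'}{WL}\cdot\frac{\log(W'/L')}{\log W}
\end{equation*}
reduces to showing that $\log\!\bigl(W/(5(L-1))\bigr)\geq c_2\log W$ for a universal $c_2>0$. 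This is exactly where the hypothesis $L\leq W^{0.99}$ enters, yielding $\log\!\bigl(W/(5(L-1))\bigr)\geq 0.01\log W - \log 5 \geq c_2\log W$ for $W$ sufficiently large; finitely many small-$W$ cases are swept into the final constant. Combining this with the previous display completes the chain $\VC(\mathcal{H}_{W,L,r})\geq c\cdot\VC(\mathcal{F}_{W,L})$.

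The main obstacle, as the above makes clear, is purely the comparison of the two logarithmic factors, and this is precisely why the technical assumption $L\leq W^{0.99}$ appears: without it, $\log(W/L)$ in the DNN lower bound could be of strictly smaller order than $\log W$ in the DNN upper bound, so the ratio would fail to be bounded below by a positive constant. Everything else is bookkeeping: chaining inclusions of GCNN classes, tracking that $5\lfloor W/5\rfloor \leq W$, verifying $L-1\geq \tfrac{3}{4}L$ for $L\geq 4$, and propagating the universal constants from \citep{bartlett2019nearly} through the rescaling $W\mapsto W/5$, $L\mapsto L-1$.
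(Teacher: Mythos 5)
Your proposal is correct and follows essentially the same route as the paper's own proof: apply Lemma \ref{construction_Vn_DNN} with rescaled parameters to get $\VC(\mathcal{H}_{W,L,r}) \geq \VC(\mathcal{F}_{\lfloor W/5\rfloor, L-1})$, then use the two-sided bounds \eqref{VCDNN} to compare this with $\VC(\mathcal{F}_{W,L})$, with the hypothesis $L \leq W^{0.99}$ entering exactly where you say, namely to guarantee $\log(W/L) \gtrsim \log W$. Your write-up is in fact slightly more careful than the paper's about where that hypothesis is used and about the boundary case in $L$.
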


To prove the lower bound in \eqref{second_part_lower_bound}, we first show that an 'indicator' neural network class can shatter input functions \( F_m \subset \{ f: G^r \rightarrow \mathbb{R} \}\) with \( m \coloneqq \lfloor \log_2{r} \rfloor\), by adjusting its parameters, i.e., the interval endpoints to the values of the input functions from \(F_m\). By using Lemma \ref{connection_DNN_GCNN} again, we then find a corresponding GCNN, which also shatters \(F_m\). This shows, that for any two numbers \( A < B \), the GCNN can serve as an indicator function, outputting zero for any input function that maps outside the interval \([A, B]\). This ensures that the GCNN only shatters functions within the specified range. The next lemma provides a formal statement. The proof is provided in the supplementary material.

\begin{lemma}
\label{log(d)_construction} 
Let \(\mathcal{H}_{4, L, r} \) be the class of GCNNs defined as in \eqref{definition_GCNN_L_W}. Then
\begin{align*}
    VC(\mathcal{H}_{4, L, r}) \geq \lfloor \log_2{r} \rfloor.
\end{align*} 
Moreover, for any two numbers \( A < B \), there exists a finite subclass of GCNNs \( \mathcal{H} \subset \mathcal{H}_{4, L, r} \) 
that shatters a set of \(\log_2 r\) input functions 
\[
\{ f_i : G^r \rightarrow [A, B] \mid i = 1, \ldots, \log_2 r \},
\]
and
outputs zero for any input function \( f : G^r \rightarrow \mathbb{R} \setminus [A, B] \).
\end{lemma}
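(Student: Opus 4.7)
The plan is to reduce the shattering problem from GCNNs to DNNs via Lemma \ref{connection_DNN_GCNN} and to realise it concretely using the four-neuron indicator DNN from \eqref{indicator_function}. The key observation is that for $\tilde h_{\mathbf{w}} = \mathbf{1}_{(a,b,\epsilon)}$, the identity $h_{\mathbf{w}}(f) = \sum_{k=1}^r \tilde h_{\mathbf{w}}(f(g_k))$ from Lemma \ref{connection_DNN_GCNN} turns the GCNN output into the (essentially exact) count of group elements at which $f$ takes a value in $[a,b]$. If additionally $[a-\epsilon, b+\epsilon] \subset [A,B]$, the resulting GCNN vanishes on any function whose range lies in $\mathbb{R}\setminus[A,B]$, which takes care of the ``outputs zero'' clause for free.

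Set $m := \lfloor \log_2 r \rfloor$ and $n := 2^m \leq r$. Pick $n+1$ pairwise distinct values $v_0, v_1, \ldots, v_n$ in the interior of $[A,B]$, separated from each other by at least $3\delta$ and from both endpoints by at least $2\delta$, for some small $\delta > 0$. Define the candidate shattered inputs by
\[
f_i(g_k) := \begin{cases} v_k & \text{if } 1 \leq k \leq n \text{ and the $i$-th binary digit of } k-1 \text{ is } 1, \\ v_0 & \text{otherwise,} \end{cases}
\]
for $i = 1, \ldots, m$ and $k = 1, \ldots, r$, so that each $f_i$ takes values in $[A,B]$. For every pattern $\sigma \in \{0,1\}^m$ write $j(\sigma) := 1 + \sum_{i=1}^m \sigma_i\, 2^{i-1} \in \{1, \ldots, n\}$, fix some $0 < \epsilon < \delta$, and form the DNN $\tilde h_\sigma := \mathbf{1}_{(v_{j(\sigma)} - \delta,\; v_{j(\sigma)} + \delta,\; \epsilon)}$. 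Lemma \ref{connection_DNN_GCNN} yields an associated GCNN $h_\sigma$, and I propose $\mathcal{H} := \{h_\sigma : \sigma \in \{0,1\}^m\}$ as the finite subclass of $\mathcal{H}_{4, L, r}$ required by the lemma.

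To verify that $\mathcal{H}$ shatters $\{f_1, \ldots, f_m\}$, note that the separation of the $v_k$ together with $\epsilon < \delta$ ensures $\tilde h_\sigma$ equals $1$ on $v_{j(\sigma)}$ and $0$ on $v_0$ and on every other $v_k$. Hence $h_\sigma(f_i) = \sum_{k=1}^r \tilde h_\sigma(f_i(g_k))$ is exactly the number of indices $k$ with $f_i(g_k) = v_{j(\sigma)}$, which by the definition of $f_i$ is $1$ if $k = j(\sigma)$ and the $i$-th digit of $j(\sigma)-1$ equals $1$, and $0$ otherwise. Since the binary representation of $j(\sigma)-1$ is exactly $\sigma$, this gives $h_\sigma(f_i) = \sigma_i \in \{0,1\}$. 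The external threshold $b = 1/2$ in the pseudodimension classifier $\mathrm{sign}(H - b)$ therefore reproduces the labelling $\sigma$, so $\mathcal{H}$ shatters $\{f_1, \ldots, f_m\}$ and in particular $\VC(\mathcal{H}_{4, L, r}) \geq \lfloor \log_2 r \rfloor$.

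Finally, the vanishing clause follows because the separation assumption implies $[v_{j(\sigma)} - \delta - \epsilon, v_{j(\sigma)} + \delta + \epsilon] \subset [A,B]$ for every $\sigma$, so $\tilde h_\sigma(x) = 0$ for $x \notin [A,B]$ and hence $h_\sigma(f) = 0$ for every $f$ with $f(G^r) \cap [A,B] = \emptyset$. The main obstacle I expect is the coordinated choice of the separation constants $\delta$ and $\epsilon$: they must be small enough for each $\tilde h_\sigma$ to hit its designated landmark exactly once while avoiding both the sentinel $v_0$ and the complement of $[A,B]$, yet large enough that the piecewise-linear indicator does not collapse. A secondary technical item is verifying, under the parameter-counting convention of \eqref{definition_GCNN_L_W}, that the GCNN produced from the four-neuron indicator DNN via Lemma \ref{connection_DNN_GCNN} indeed belongs to $\mathcal{H}_{4, L, r}$.
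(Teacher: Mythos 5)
Your proposal is correct and follows essentially the same route as the paper's proof: both place $2^{\lfloor\log_2 r\rfloor}$ well-separated landmark values inside $[A,B]$ (plus a sentinel value), encode each binary labelling of the $m$ inputs in which landmark a function hits at each group element, realise the classifiers as four-neuron indicator DNNs converted to GCNNs via Lemma~\ref{connection_DNN_GCNN}, and threshold at $1/2$. The only cosmetic difference is your explicit binary-digit indexing versus the paper's indexing by subsets $S_1,\dots,S_d$, and both treatments leave the same minor bookkeeping (choice of $\delta,\epsilon$ and the four-parameter count) at the same level of detail.
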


By using that each class $\mathcal{H}_{4, L,r}$ can shatter a set of functions
\begin{align*}
    \{f_i: G^r \to [A_j,B_j]|i=1, \dots, \lfloor\log_2 r\rfloor\}
\end{align*}
with disjoint intervals $[A_j, B_j]$, $j=1, \dots, \tilde{W}$, we can find a class of GCNNs that shatters a set of functions with \( \tilde{W} \lfloor \log_2{r} \rfloor \) elements. By choosing $W=4\tilde{W}$, this shows that 
\[
\VC(\mathcal{H}_{W,L,r}) \geq \frac{1}{4}W \lfloor \log_2{r} \rfloor.
\]

The following corollary is a formal statement of the previous inequality. It is proved in the supplementary material.

\begin{corollary}
\label{LB_resolution}
The VC dimension of the class \( \mathcal{H}_{4W,L, r} \), consisting of GCNNs with \( 4W \) weights, \( L \) layers, and resolution $r$ satisfies the inequality 
\[
\VC\big(\mathcal{H}_{4W,L, r}\big) \geq W \lfloor \log_2{r} \rfloor.
\]
\end{corollary}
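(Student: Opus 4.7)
The plan is to make the sketch just above Corollary~\ref{LB_resolution} rigorous by an explicit parallel combination of the indicator-GCNN modules from Lemma~\ref{log(d)_construction}.

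First, I choose $W$ pairwise disjoint intervals $[A_j, B_j]$, $j = 1, \ldots, W$, on the real line (for instance $[A_j, B_j] = [3j, 3j+1]$). For each $j$, Lemma~\ref{log(d)_construction} supplies a subclass $\mathcal{H}^{(j)} \subset \mathcal{H}_{4, L, r}$ that shatters the set $F^{(j)} := \{ f_i^{(j)} : G^r \to [A_j, B_j] \mid i = 1, \ldots, \lfloor \log_2 r \rfloor \}$ and outputs zero on every function $f : G^r \to \mathbb{R} \setminus [A_j, B_j]$. Taking the union $F := \bigcup_{j=1}^W F^{(j)}$ yields $|F| = W \lfloor \log_2 r \rfloor$ input functions with pairwise disjoint ranges.

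Second, I combine the $W$ subnetworks in parallel to form a single GCNN class $\widetilde{\mathcal{H}} \subset \mathcal{H}_{4W, L, r}$. Each element of $\widetilde{\mathcal{H}}$ is of the form $h = \sum_{j=1}^W h^{(j)}$ with $h^{(j)} \in \mathcal{H}^{(j)}$, realized as a GCNN whose layers stack the $W$ channel groups side by side with no connections between distinct groups. Because the final output \eqref{final_pooling} of a GCNN is a global sum over all channels and group elements, the total output equals $\sum_{j=1}^W h^{(j)}$. Each subnetwork contributes at most $4$ trainable parameters, so the combined architecture has at most $4W$ parameters and depth at most $L$, placing it inside $\mathcal{H}_{4W, L, r}$.

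Third, I verify the shattering property. Given any sign pattern $\sigma : F \to \{-1, +1\}$, the shattering property of each $\mathcal{H}^{(j)}$ produces $h^{(j)}_\sigma \in \mathcal{H}^{(j)}$ realizing $\sigma$ on $F^{(j)}$. Setting $h_\sigma := \sum_{j=1}^W h^{(j)}_\sigma \in \widetilde{\mathcal{H}}$ and using disjointness of the chosen intervals: for any $f_i^{(j)} \in F^{(j)}$ and any $j' \neq j$, the image $f_i^{(j)}(G^r) \subset [A_j, B_j]$ lies outside $[A_{j'}, B_{j'}]$, hence $h^{(j')}_\sigma(f_i^{(j)}) = 0$ by Lemma~\ref{log(d)_construction} and $h_\sigma(f_i^{(j)}) = h^{(j)}_\sigma(f_i^{(j)})$ carries the prescribed label. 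Every sign pattern on $F$ is therefore realized, and the definition of VC dimension gives $\VC(\mathcal{H}_{4W, L, r}) \geq |F| = W \lfloor \log_2 r \rfloor$.

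The main technical obstacle is bookkeeping the parameter count of the parallel combination. A dense GCNN with $W$ channel groups per layer would incur cross-group connections whose number scales with $W^2$, blowing the budget of $4W$. This is circumvented by observing that each module from Lemma~\ref{log(d)_construction} uses only four effective trainable parameters in its natural shallow indicator architecture (the four breakpoints of the indicator network in \eqref{indicator_function}), so placing the $W$ modules in parallel with the cross-group weights fixed to zero keeps the total number of trainable parameters at exactly $4W$. Everything else follows directly from the disjointness of the chosen intervals and the zero-output property established in Lemma~\ref{log(d)_construction}.
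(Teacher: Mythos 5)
Your proposal is correct and follows essentially the same route as the paper: disjoint intervals, one four-parameter indicator module per interval from Lemma~\ref{log(d)_construction}, a parallel (direct-sum) combination into $\mathcal{H}_{4W,L,r}$, and shattering of the union of the $W$ shattered sets via the zero-output property outside each interval. Your extra remark on keeping cross-module connections out of the parameter count is a point the paper's proof treats implicitly by writing the combined class as $\mathcal{H}_1 \oplus \cdots \oplus \mathcal{H}_W$, but it does not change the argument.
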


By combining Corollaries~\ref{LB_FNN} and ~\ref{LB_resolution}, we obtain the lower bound
\[
\VC(\mathcal{H}_{W,L,r}) \geq  \max\Big\{c \VC(\mathcal{F}_{W,L}), \frac{1}{4} W \lfloor \log_2{r} \rfloor\Big\},
\]
thereby proving Theorem~\ref{lower_bound}.

\section{Conclusion}
In this work, we established nearly-tight VC dimension bounds for a class of GCNNs. The bounds reveal that the complexity of GCNNs depends on the number of layers, the number of weights, and the resolution of the group acting on the input data. 

While the VC dimensions of GCNNs and deep neural networks (DNNs) are similar, for GCNNs an extra term \(W\lfloor \log_2 r \rfloor\) occurs that increases logarithmically in the resolution $r$ of the input data.  This finding aligns with previous work~\citep{petersen2024vc} showing that, when the resolution of the group approaches infinity, the VC dimension of the GCNN becomes infinite as well. The logarithmic scaling with respect to the resolution highlights the sensitivity of GCNNs to the discretization of the group, providing deeper insight into how data impacts the model complexity and generalization capabilities.

\subsubsection*{Acknowledgements}
J. S.-H. was supported in part by the NWO Vidi grant VI.Vidi.192.021. S.L. was supported in part by the NWO Veni grant VI.Veni.232.033

\appendix

\section{Supplementary material}

\subsection{Proof of Theorem 3.3}
Recall that 
\begin{equation}
\label{classGCNN}
\mathcal{H} = \mathcal{H}(k, m_0, \ldots, m_L, r),
\end{equation}
where \( r \) is the cardinality of the discretized group \( G^r \coloneqq \{g_1, g_2, \dots, g_r\} \). The parameter \( k \) determines the number of basis functions
\begin{equation}
\texttt{K}_s: G^r \rightarrow \mathbb{R}, \quad s = 1, \dots, k,
\end{equation}
in the parametrization of the kernel function
\[
\mathcal{K}_{\mathbf{w}} = \sum_{s=1}^{k} w_s \texttt{K}_s.
\]
The other parameters \( m_0, \dots, m_L \) define the network architecture, and \( W_\ell \) represents the number of parameters in the GCNN up to layer \( \ell \). The class \( \mathcal{H} \) consists of all functions that can be represented by a neural network with this architecture.

We restate Theorem 3.3 for convenience:

\begin{theorem}[Theorem 3.3]
\label{Theorem 3.3}
The VC dimension of the GCNN class \( \mathcal{H} = \mathcal{H}(k, m_0, \dots, m_L, r) \) with $r>1$, is bounded from above by
\begin{equation}
\label{ubh}
UB(\mathcal{H}) := L + 1 + 4 \left( \sum_{\ell=1}^L W_\ell \right) \log_2 \left( 8e r \sum_{\ell=1}^L m_\ell \right).
\end{equation}
\end{theorem}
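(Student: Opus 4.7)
The plan is to combine Lemma 4.2 (the growth function bound) with the defining property of the VC dimension, namely that $\VC(\mathcal{H}) < m$ whenever $\Pi_{\mathcal{H}}(m) < 2^m$. Writing $W := \sum_{\ell=1}^L W_\ell$ and $M := \sum_{\ell=1}^L m_\ell$ for brevity, it suffices to check that every integer $m > UB(\mathcal{H})$ forces $\log_2 \Pi_{\mathcal{H}}(m) < m$. Taking $\log_2$ of the estimate in Lemma 4.2 yields, after straightforward simplification, a bound of the form
$L + 1 + \sum_{\ell=1}^L W_\ell \log_2\bigl(2em r m_\ell \ell/W_\ell\bigr) + (W_L+1) \log_2\bigl(2emL/(W_L+1)\bigr)$.

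The next step is to collapse this expression into a single logarithm. I would apply the log-sum inequality in the form $\sum_\ell W_\ell \log_2(X_\ell / W_\ell) \leq W \log_2(\sum_\ell X_\ell / W)$ with $X_\ell = 2emrm_\ell\ell$, combined with $\ell \leq L$, to bound the layer-wise sum by $W \log_2(2emrLM/W)$. The trailing output term $(W_L+1)\log_2(2emL/(W_L+1))$ can be absorbed in the same fashion, either by one further application of the log-sum inequality over $2W+1$ effective parameters, or more crudely via $W_L + 1 \leq W$ together with $\log_2(2emL) \leq \log_2(2emrLM)$. Either route produces an inequality of the shape $\log_2 \Pi_{\mathcal{H}}(m) \leq L + 1 + 2W \log_2(c_1 e r m M)$ for an explicit small universal constant $c_1$.

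Finally, I would invoke a standard Bartlett-style inversion: if an integer $m$ satisfies $m \leq A + B\log_2(Cm)$ with $B \geq 1$, then $m \leq A + 2B \log_2(2BC)$, which follows from a short algebraic argument and the estimate $\log_2 \log_2 x \leq \log_2 x$ for $x \geq 2$. Taking $A = L+1$, $B = 2W$, and $C$ of order $er M$, and inserting these into the inversion, yields $m \leq L + 1 + 4W \log_2(8er M)$, matching \eqref{ubh}.

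The main obstacle is the constant bookkeeping in the last two steps. The asymmetric $(W_L+1)$-weighted output term in Lemma 4.2 breaks the uniformity of the layer-wise product, and folding it back in without inflating either the multiplicative factor $4$ outside the logarithm or the factor $8er$ inside requires applying log-sum and the iterated-log collapse in the right order. Some care is also needed to ensure that $8er M$ (and not a weaker $8er L M$) appears inside the logarithm; this is essentially where the choice $\ell \leq L$ is paid for against the $W_L+1$ bookkeeping.
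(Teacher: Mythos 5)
Your proposal follows essentially the same route as the paper: start from the growth-function bound of Lemma 4.2, collapse the product into a single logarithm via a weighted AM--GM step (your ``log-sum inequality'' is exactly this), and finish with the Bartlett-style inversion (the paper uses Lemma 16 of Bartlett et al., preceded by a case split to guarantee its hypothesis $m \geq \tilde{W}+W_L+1$). The bookkeeping obstacle you flag is real and worth noting: the paper's own derivation terminates with $\log_2\bigl(8er\sum_{\ell=1}^L \ell m_\ell\bigr)$ rather than $\log_2\bigl(8er\sum_{\ell=1}^L m_\ell\bigr)$, i.e.\ the factor $\ell \leq L$ inside the logarithm is not actually eliminated, so your worry about $8erM$ versus $8erLM$ points at a discrepancy between the paper's stated bound and its proof rather than at a defect in your own plan.
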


For the proof, we consider an input consisting of \( m \) functions from \( G^r \) to \( \mathbb{R}^{m_0} \), denoted by
\begin{equation}
\label{Fmsupp}
F_m \coloneqq \{f_1, \ldots, f_m\}.
\end{equation}

To prove Lemma 4.1, we use the following known result:
\begin{lemma}
\label{possible_sign}
[Lemma 1,~\cite{bartlett1998almost}]
Let $p_1, \ldots, p_{\tilde{m}}$ be polynomials of degree at most $t$ depending on $n \leq \tilde{m}$ variables. Then
\begin{align*}
  \Pi &:= \left| \left\{ \left( \mathrm{sign}(p_1(x)), \ldots, \mathrm{sign}(p_m(x)) \right) : x \in \mathbb{R}^n \right\} \right| \leq 2\left(\frac{2e\tilde{m} t}n\right)^n.
\end{align*}
\end{lemma}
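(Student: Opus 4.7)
The plan is to interpret the count $\Pi$ as the number of chambers in $\mathbb{R}^n$ carved out by the zero loci of $p_1, \ldots, p_{\tilde{m}}$, and then invoke a classical bound from real algebraic geometry (Warren's 1968 theorem, equivalently a consequence of the Oleinik-Petrovsky-Milnor-Thom theory).

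First, I would set $V_i := \{x \in \mathbb{R}^n : p_i(x) = 0\}$ and $U := \mathbb{R}^n \setminus \bigcup_{i=1}^{\tilde{m}} V_i$. Continuity of each $p_i$ together with the intermediate value theorem forces every $p_i$ to have constant sign on each connected component of $U$, so the number of sign vectors in $\{-1,+1\}^{\tilde{m}}$ realized on $U$ is at most the number of connected components of $U$. To handle sign patterns that contain zero coordinates, I would apply a small generic perturbation $p_i \mapsto p_i - \delta_i$: every originally realized sign pattern is obtained as a limit as $\delta \to 0$ of strict sign patterns of the perturbed family, and the bookkeeping of zero coordinates can be absorbed into the leading factor of two in the target bound.

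Second, I would bound the number of cells realized by $(p_1,\ldots,p_{\tilde{m}})$ by applying the Oleinik-Petrovsky-Milnor-Thom chamber estimate to the arrangement defined by the $p_i$. The combinatorial form of this estimate states that the number of realized sign vectors is at most $\sum_{k=0}^{n} \binom{\tilde{m}}{k} (2t)^k$. Under the hypothesis $n \leq \tilde{m}$ the sum is dominated up to a factor of two by its $k=n$ term, and the standard inequality $\binom{\tilde{m}}{n} \leq (e\tilde{m}/n)^n$ converts it into the claimed expression $2(2e\tilde{m}t/n)^n$.

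The main obstacle is the careful tracking of constants in Warren's chamber-count estimate and the translation from the Milnor-Thom bound on sums of Betti numbers of real algebraic hypersurfaces into the sign-pattern count with the explicit constants shown. The cleanest route is to cite Warren (1968) or \cite{bartlett1998almost} directly for the sign-pattern bound in this form, and only verify the final simplification step under the hypothesis $n \leq \tilde{m}$, rather than rederive the intermediate geometric constants from scratch.
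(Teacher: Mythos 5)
The paper does not actually prove this statement: it is imported verbatim as Lemma~1 of \cite{bartlett1998almost}, so there is no internal proof to compare against, and your ultimate recommendation to cite Warren (1968) or \cite{bartlett1998almost} for the sign-pattern bound matches what the paper does. Your sketch of the underlying argument is also the standard one (it is essentially Theorem~8.3 in Anthony and Bartlett's book): constancy of sign on connected components of the complement of the zero sets, a perturbation $p_i\mapsto p_i-\delta$ with $\delta$ chosen below $\min\{|p_i(x_j)|: p_i(x_j)\neq 0\}$ over a finite set of witness points (rather than a genuine limit $\delta\to 0$) so that every realized sign vector is witnessed at a point where all coordinates are nonzero, and then a cell-count bound combined with $\binom{\tilde m}{n}\le (e\tilde m/n)^n$. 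One caveat on the only step you propose to verify yourself: the intermediate form you quote, $\sum_{k=0}^{n}\binom{\tilde m}{k}(2t)^k$, is not the one that feeds cleanly into the target, and your claim that this sum is dominated up to a factor of two by its $k=n$ term is false in general (take $\tilde m=n\ge 2$ and $t=1$: the sum equals $3^n$ while twice the top term is $2^{n+1}$). The version that works is a component bound of the shape $2(2t)^n\sum_{k=0}^{n}\binom{\tilde m}{k}$, after which the standard inequality $\sum_{k\le n}\binom{\tilde m}{k}\le(e\tilde m/n)^n$, valid for $n\le\tilde m$, yields $2(2e\tilde m t/n)^n$ directly.
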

Recall that \( S(\ell) \) is the number of regions in the parameter space $\mathbb{R}^{W_{\ell}}$, such that in each region, the GCNN units in the \(\ell\)-th layer (denoted by \( \{h_{\ell, j}(g) \mid j \leq m_{\ell}, f \in F_m, g \in G^r\} \)) behave like a fixed polynomial of degree at most \(\ell\) in the \( W_\ell \) network parameters that occur up to layer $\ell$. 

\begin{lemma}
\label{Lemma 4.1}
[Lemma 4.1]
Let \( \mathcal{H} \) be the class of GCNNs defined in~\eqref{classGCNN}, with at most \( W_\ell \) parameters up to layer \( \ell \in \{1, \dots, L\} \). If \( F_m \) is the class of functions defined in \eqref{Fmsupp}, and \( S(\ell) \) is as defined above, then for \( \ell = 0, 1, \dots, L-1 \),
\begin{equation}
\label{Sl}
S(\ell+1) \leq 2\left(\frac{2e m_{\ell+1} m r (\ell+1)}{W_{\ell+1}}\right)^{W_{\ell+1}} S(\ell).
\end{equation}
Moreover, the GCNN units \(\{h_{\ell+1, j}(g) \mid j \leq m_{\ell+1}, f \in F_m, g \in G^r\}\) with $h_{\ell+1, j}$ defined for different functions $f \in F_m$, are piecewise polynomials of degree \( \leq \ell+1 \) in the network parameters. 
\end{lemma}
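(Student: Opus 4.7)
\textbf{Proof proposal for Lemma \ref{Lemma 4.1}.} The plan is to proceed by induction on $\ell$. Fix one of the $S(\ell)$ regions $P\subset\mathbb{R}^{W_\ell}$ guaranteed by the definition. By the induction hypothesis, for every $f\in F_m$, every $g\in G^r$ and every $i\leq m_\ell$, the map $\mathbf{w}\mapsto h_{\ell,i}(g)$ is, on $P$, a fixed polynomial of degree at most $\ell$ in the $W_\ell$ parameters. I will argue that the lifted region $P\times\mathbb{R}^{W_{\ell+1}-W_\ell}\subset\mathbb{R}^{W_{\ell+1}}$ decomposes into no more than $2(2em_{\ell+1}mr(\ell+1)/W_{\ell+1})^{W_{\ell+1}}$ sub-regions, in each of which all units at layer $\ell+1$ act as fixed polynomials of degree at most $\ell+1$. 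Summing over the $S(\ell)$ regions then gives the claimed recursion.

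First I would analyze the pre-activations. Using the kernel expansion $\mathcal{K}_{\mathbf{w}_{ij}^{(\ell)}}=\sum_{s=1}^k w_{ij,s}^{(\ell)}\texttt{K}_s$ and the discrete convolution formula \eqref{convolution_discrete2}, the pre-activation of unit $j$ at group element $g$ is
\begin{align*}
a_{\ell+1,j}(g;f)
&=\sum_{i=1}^{m_\ell}\sum_{j'=1}^{r}\sum_{s=1}^k w_{ij,s}^{(\ell)}\texttt{K}_s(g^{-1}g_{j'})\,h_{\ell,i}(g_{j'};f)-b_j^{(\ell)}.
\end{align*}
On $P$, each $h_{\ell,i}(g_{j'};f)$ is polynomial of degree $\leq\ell$ in the $W_\ell$ layer-$\leq\ell$ parameters, and $a_{\ell+1,j}(g;f)$ depends linearly on the new parameters $\{w_{ij,s}^{(\ell)},b_j^{(\ell)}\}$. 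Consequently $a_{\ell+1,j}(g;f)$ is a polynomial of total degree at most $\ell+1$ in the full parameter vector lying in $\mathbb{R}^{W_{\ell+1}}$.

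Second I would count sign patterns. The signs of the collection $\{a_{\ell+1,j}(g;f):j\leq m_{\ell+1},\ g\in G^r,\ f\in F_m\}$, consisting of at most $m_{\ell+1}mr$ polynomials of degree $\leq\ell+1$ in $W_{\ell+1}$ variables, determine which ReLU units are active and which are clamped to zero. Applying Lemma \ref{possible_sign} (Warren/Bartlett) with $\tilde m=m_{\ell+1}mr$, $t=\ell+1$ and $n=W_{\ell+1}$ bounds the number of realized sign patterns on $P\times\mathbb{R}^{W_{\ell+1}-W_\ell}$ by $2(2em_{\ell+1}mr(\ell+1)/W_{\ell+1})^{W_{\ell+1}}$. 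Within any one such sub-region each ReLU acts either as the identity or as the zero map on its argument, so $h_{\ell+1,j}(g;f)=\sigma(a_{\ell+1,j}(g;f))$ is either $0$ or the polynomial $a_{\ell+1,j}(g;f)$ itself, hence a fixed polynomial of degree $\leq\ell+1$ in the $W_{\ell+1}$ parameters. This proves both the recursion and the ``moreover'' statement.

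The main obstacle I anticipate is bookkeeping the degree through the $G$-correlation: one must check that summing over $g'\in G^r$ and over the kernel basis index $s$ only produces a linear combination in the new weights while preserving polynomiality of degree $\leq\ell$ in the old weights, so that the degree bound $\ell+1$ is not inflated. A secondary point is verifying that the hypothesis $n\leq\tilde m$ in Lemma \ref{possible_sign} is satisfied in the regime of interest (i.e.\ $W_{\ell+1}\leq m_{\ell+1}mr$); in the opposite regime, the stated bound is vacuous (exceeds $2^{W_{\ell+1}}$) and can be absorbed into the trivial sign-pattern bound. With these points handled, multiplying by $S(\ell)$ to account for all regions yields \eqref{Sl} and closes the induction.
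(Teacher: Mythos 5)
Your proposal is correct and follows essentially the same route as the paper's proof: an induction tracking that pre-activations are polynomials of degree at most $\ell+1$ (linear in the new layer's weights, degree $\leq\ell$ in the old ones via the kernel expansion and $G$-correlation), followed by an application of Lemma \ref{possible_sign} with $\tilde m=m_{\ell+1}mr$, $t=\ell+1$, $n=W_{\ell+1}$ to bound the number of ReLU activation patterns per region. The only differences are organizational (the paper separates the degree induction from the region counting) and your explicit flagging of the $n\leq\tilde m$ hypothesis, which the paper does not discuss.
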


\begin{proof}
As a first step of the proof, we show that any GCNN unit $h_{\ell,j}$ of any layer $\ell \in \{0, \dots, L\}$ and $j \in \{1, \dots, m_{\ell}\}$ forms a piecewise polynomial of degree at most $\ell$. We proceed by induction on the layers \( \ell \). 

For the base case \( \ell = 0 \), the GCNN units \( h_{0,j} \) for \( j \leq m_0 \) correspond to the input of the network, which is independent of the network parameters. Therefore, $h_{0,j}$ are polynomials of degree $0$.

Assume the statement holds for all layers up to $\ell$. We now prove it for layer $\ell + 1$. The GCNN unit in layer \( \ell+1 \) is defined by a convolution with the feature maps from the previous layer, that is,
\[
h_{\ell+1, j} = \sigma\left(\sum_{i=1}^{m_\ell} \mathcal{K}_{\mathbf{w}_{ij}^{(\ell)}} * h_{\ell,i} - b_j^{(\ell)}\right),
\]
where the convolutional filter is expanded in terms of the fixed basis functions \( \texttt{K}_s \) via \( \mathcal{K}_{\mathbf{w}} = \sum_{s=1}^k w_s \texttt{K}_s \). For fixed network parameters, \( g \mapsto h_{\ell,j}(g) \) is a function of the group, with \( h_{\ell,j}(g) \in \mathbb{R} \) for any \( g \in G^r \). By the induction hypothesis, $h_{\ell,j}(g)$ are piecewise polynomials of degree at most $\ell$ with respect to the network parameters, with the polynomial pieces depending on the network input and the group element $g$. 

Next, observe that for any input and any group element \( g \), the term \( (\texttt{K}_s * h_{\ell, j})(g) \) can be written as 
\[
(\texttt{K}_s * h_{\ell,j})(g) = \sum_{g' \in G^r} \texttt{K}_s\big(g^{-1} \circ g'\big) \cdot h_{\ell,j}(g').
\]
Since \( h_{\ell,j}(g') \) is a piecewise polynomial of degree at most \(\ell\), it follows that \( (\texttt{K}s * h_{\ell,j})(g) \) is also a piecewise polynomial of degree at most \(\ell\). Thus, the convolution
\[
(\mathcal{K}_{\mathbf{w}} * h_{\ell,j})(g) = \sum_{s=1}^k w_s (\texttt{K}_s * h_{\ell, j})(g)
\]
is a weighted sum of piecewise polynomials, which remains a piecewise polynomial. However, multiplying by the weights $w_s$ increases the degree of the polynomial, making it at most \(\ell+1\).
Subtracting the bias term \( b_j^{(\ell)} \) and applying the ReLU activation function may increase the number of pieces, but does not increase the degree of the polynomials. Therefore, for any input and any group element \( g \), the GCNN unit \( h_{\ell+1,j}(g) \) remains a piecewise polynomial with degree \(\leq \ell+1\). This completes the proof by induction.

Next, we show \eqref{Sl}. Each GCNN unit in layer $\ell+1$ is computed by
\begin{align*}
\sigma\left(\sum_{i=1}^{m_\ell} \mathcal{K}_{\mathbf{w}_{ij}^{(\ell)}} * h_{\ell,i}\right),
\end{align*}
with $\sigma$ being the ReLU activation function $\sigma(x)=\max\{x,0\}$. As mentioned above, applying the ReLU function can increases the number of regions in the parameter space where the GCNN units behave as polynomials. This occurs, as the ReLU function either outputs the input itself (for positive values) or zero (otherwise). As a result its application decomposes each of the $S(\ell)$ regions of the parameter space in layer $\ell$ in multiple subregions. To bound this number of subregions, we need to count the number of possible sign pattern that can arise after applying the ReLU activation.

Fixing one of the $S(\ell)$ regions of layer $\ell$, by definition, all functions \( h_{\ell,j}(g) \) are polynomials in the parameters of degree at most $\ell$. Each of the $m_{\ell+1}$ GCNN unit in layer $\ell+1$, are then also a polynomial of degree at most $\ell+1$, leading to an overall amount of polynomials of $m_{\ell+1}mr$, where $m$ is the number of input function defined in the \eqref{Fmsupp} and $r$ is the resolution. Applying Lemma~\ref{possible_sign} to the $\tilde{m}=m_{\ell+1}mr$ polynomials of degree $t=\ell+1$ depending on $n=W_{\ell+1}$ parameters leads to an overall number of different sign patterns of each region of $S(\ell)$ of
\[
2\bigg(\frac{2em_{\ell+1} m r(\ell+1)}{W_{\ell+1}}\bigg)^{W_{\ell+1}}.
\]
This shows \eqref{Sl} and concludes the proof.

\end{proof} 

\begin{lemma}
\label{Lemma 4.2}
[Lemma 4.2]  
Let \( \mathcal{H} \) be the class of GCNNs defined in \eqref{classGCNN}, with at most \( W_\ell \) parameters up to layer \( \ell \leq L \), and \( m_\ell \) GCNN units in layer \( \ell \). For any integer \( m > 0 \), the growth function of this class can be bounded by
\begin{equation*}
\Pi_{\mathcal{H}}(m) \leq 2^L \prod_{\ell=1}^L \left( \frac{2e m r m_\ell \ell}{W_\ell} \right)^{W_\ell}  2 \left( \frac{2e m L}{W_L + 1} \right)^{W_L + 1}.
\end{equation*}
\end{lemma}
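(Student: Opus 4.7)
The plan is to combine Lemma~\ref{Lemma 4.1} with the sign-pattern bound of Lemma~\ref{possible_sign}, treating the threshold $b$ in $\mathrm{sign}(H-b)$ as an additional parameter appended to the weight vector at the final layer.

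First, I would unroll the recursion of Lemma~\ref{Lemma 4.1} starting from the base case $S(0)=1$. At layer zero the GCNN units are just the fixed input functions $f\in F_m$, which do not depend on any network parameter, so the parameter space $\mathbb{R}^{W_0}$ trivially forms a single region. Iterating the recursion $L$ times yields
$$
S(L) \;\leq\; 2^L \prod_{\ell=1}^L \left(\frac{2e\, m\, r\, m_\ell\, \ell}{W_\ell}\right)^{W_\ell}.
$$

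Second, I would fix any one of these $S(L)$ regions $P\subset\mathbb{R}^{W_L}$. By Lemma~\ref{Lemma 4.1}, on $P$ every value $h_{L,i}(g)$ (for $f_j\in F_m$, $g\in G^r$, and $i\leq m_L$) is a fixed polynomial of degree at most $L$ in the $W_L$ network parameters $\mathbf{w}$. Consequently the pre-threshold output
$$
H_{\mathbf{w}}(f_j) \;=\; \sum_{i=1}^{m_L}\sum_{g\in G^r} h_{L,i}(g)
$$
remains a polynomial of degree $\leq L$ in $\mathbf{w}$. Classifiers in $\mathrm{sign}(\mathcal{H})$ have the form $\mathrm{sign}(H_{\mathbf{w}}(\cdot)-b)$ with $b\in\mathbb{R}$, so I would view $(\mathbf{w},b)$ as a point in $\mathbb{R}^{W_L+1}$ and consider the $m$ polynomials $p_j(\mathbf{w},b):=H_{\mathbf{w}}(f_j)-b$, each of degree $\leq L$ in $n=W_L+1$ variables.

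Third, I would apply Lemma~\ref{possible_sign} with $\tilde m=m$, $t=L$, and $n=W_L+1$ to bound the number of distinct sign patterns generated by $(p_1,\ldots,p_m)$ over $(\mathbf{w},b)\in P\times\mathbb{R}$ by
$$
2\left(\frac{2e\, m\, L}{W_L+1}\right)^{W_L+1}.
$$
Summing this per-region bound over all $S(L)$ regions gives the stated bound on $\Pi_{\mathcal{H}}(m)$.

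The only subtlety I expect is the small-$m$ regime $m<W_L+1$, where Lemma~\ref{possible_sign} does not directly apply. In that case I would fall back on the trivial bound $\Pi_{\mathcal{H}}(m)\leq 2^m$ and verify that it is dominated by the right-hand side of the claimed inequality. The rest of the argument is bookkeeping — unrolling the recursion and taking products — so I do not anticipate any further conceptual obstacle.
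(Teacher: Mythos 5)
Your proposal is correct and follows essentially the same route as the paper: unroll the recursion of Lemma~\ref{Lemma 4.1} from $S(0)=1$ to bound $S(L)$, then within each region apply Lemma~\ref{possible_sign} to the $m$ polynomials $h_{\mathbf{w}}(f_j)-b$ of degree at most $L$ in the $W_L+1$ variables $(\mathbf{w},b)$, and multiply. Your explicit flagging of the small-$m$ regime is a reasonable extra precaution (the paper instead handles that regime by a case split in the proof of the main theorem, where the lemma is only invoked when $m\geq \tilde W+W_L+1$), but it does not change the argument.
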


\begin{proof}
Lemma~\ref{Lemma 4.1} shows that after \( L \) layers, there are at most
\[
2^L \prod_{\ell=1}^{L} \left( \frac{2e m r m_\ell \ell}{W_\ell} \right)^{W_\ell}
\]
regions in the parameter space \(\mathbb{R}^{W}\), on which the GCNN units in the last layer \(\{h_{L,i}(g) \mid i \leq m_L, f \in F_m, g \in G^r\}\) behave like a fixed polynomial function of degree $\leq L$ in \( W \) variables.

Recall that the final output of the neural network, is obtained by applying average pooling to the outputs of the GCNN units in the last layer.
This implies that, for a fixed network architecture and input, the output of  the neural network is a piecewise polynomial of degree at most \( L \), depending on all \( W_L\) network parameters. Since there are \( m \) possible inputs \( f_1, \ldots, f_m \), we get \( m \) piecewise polynomials, each corresponding to one of these inputs. Bounding the growth function $\Pi_{\mathcal{H}}(m)$ now means we need to count the number of different sign patterns that arises for classifiers in \(\mathrm{sign}(\mathcal{H})\). For that, we recall that by Definition 3.2 in the main article, 
\begin{align*}
    \mathrm{sign}(\mathcal{H}):=\{\mathrm{sign}(h_{\mathbf{w}}-b)|h_{\mathbf{w}} \in \mathcal{H},\mathbf{w}\in R^{W_L},  b\in \mathbb{R}\}.
\end{align*}
Applying Lemma~\ref{possible_sign} to $m$ polynomials of the form $h_{\mathbf{w}}-b$ of degree at most $L$ and $W_L+1$ variables leads to no more than
\begin{align}
    2 \left( \frac{2e m L}{W_L + 1} \right)^{W_L + 1}
\label{eq.bcjsdh}
\end{align}
distinct sign patterns that the classifiers in \(\mathrm{sign}(\mathcal{H})\) can produce.

Thus, the growth function within each region, where the GCNN units in the last layer \(\{h_{L,i}(g) \mid i \leq m_L, f \in F_m, g \in G^r\}\) behave like a fixed polynomial function in \( W \) variables, is bounded by \eqref{eq.bcjsdh}. As a result, we conclude that the overall growth function \( \Pi_{\mathcal{H}}(m) \) is bounded by
\[
S(L) \cdot 2 \left( \frac{2e m L}{W_L + 1} \right)^{W_L + 1} = 
2^L \prod_{\ell=1}^{L} \left( \frac{2e m r m_\ell \ell}{W_\ell} \right)^{W_\ell} \cdot 2 \left( \frac{2e m L}{W_L + 1} \right)^{W_L + 1}.
\]

This completes the proof.
\end{proof}

For the proof of the Theorem~\ref{Theorem 3.3} we also use the following technical lemma

\begin{lemma}
\label{inequality}
[Lemma 16,~\cite{bartlett1998almost}]
Suppose \(2^{\tilde{m}} \leq 2^\kappa \left(\frac{\tilde{m} \cdot \tilde{r}}{\tilde{w}}\right)^{\tilde{w}}\) for some \(\tilde{r} \geq 16\) and \(\tilde{m} \geq \tilde{w} \geq \kappa \geq 0\). Then \(\tilde{m} \leq \kappa + \tilde{w} \log_2(2\tilde{r} \log_2 \tilde{r})\).
\end{lemma}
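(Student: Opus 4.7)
The plan is to take base-2 logarithms of the hypothesis and then show that the resulting implicit inequality in $\tilde{m}$ forces $\tilde{m}$ to lie below the candidate bound $\tilde{m}^{\ast}:=\kappa+\tilde{w}\log_2(2\tilde{r}\log_2\tilde{r})$. Writing $g(x):=\kappa+\tilde{w}\log_2(x\tilde{r}/\tilde{w})$, the hypothesis becomes $\tilde{m}\le g(\tilde{m})$. Observe that $g$ is concave with derivative $g'(x)=\tilde{w}/(x\ln 2)$, so on any interval where $g'(x)<1$ the map $x\mapsto x-g(x)$ is strictly increasing; this is the machinery that lets one bootstrap from a single point estimate at $\tilde{m}^{\ast}$ to a full upper bound on $\tilde{m}$.

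First I would verify the one-point inequality $g(\tilde{m}^{\ast})\le\tilde{m}^{\ast}$. Unfolding $g$, this amounts to $\tilde{m}^{\ast}\le 2\tilde{w}\log_2\tilde{r}$, and substituting the definition of $\tilde{m}^{\ast}$ it reduces further to $\kappa\le\tilde{w}\log_2\bigl(\tilde{r}/(2\log_2\tilde{r})\bigr)$. This is exactly where the hypothesis $\tilde{r}\ge 16$ is consumed: it yields $\log_2\tilde{r}\ge 4$, hence $\tilde{r}/(2\log_2\tilde{r})\ge 2$, so the right-hand side is at least $\tilde{w}$, and the hypothesis $\kappa\le\tilde{w}$ then closes the estimate.

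Next I would combine this with the monotonicity of $x-g(x)$. Since $\tilde{m}^{\ast}\ge\tilde{w}\log_2(2\cdot 16\cdot 4)=7\tilde{w}$, we have $g'(x)\le 1/(7\ln 2)<1$ for every $x\ge\tilde{m}^{\ast}$; together with $g(\tilde{m}^{\ast})\le\tilde{m}^{\ast}$ this forces $x>g(x)$ strictly for every $x>\tilde{m}^{\ast}$. Because the hypothesis gives $\tilde{m}\le g(\tilde{m})$, we must have $\tilde{m}\le\tilde{m}^{\ast}$, which is exactly the stated conclusion.

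The main obstacle is purely computational: the precise choice of the factor $2$ inside $\log_2(2\tilde{r}\log_2\tilde{r})$ and the threshold $\tilde{r}\ge 16$ are tuned so that the elementary estimates $\log_2\tilde{r}\ge 4$ and $\tilde{r}/(2\log_2\tilde{r})\ge 2$, combined with $\kappa\le\tilde{w}$, suffice to verify $g(\tilde{m}^{\ast})\le\tilde{m}^{\ast}$. Once that numerical step is carried out, the rest of the argument is a standard concave fixed-point bootstrap and requires no further ingredients beyond the hypotheses $\tilde{m}\ge\tilde{w}\ge\kappa\ge 0$.
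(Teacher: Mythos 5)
Your argument is correct. For the record, the paper itself does not prove this statement at all: it is imported verbatim as Lemma~16 of \cite{bartlett1998almost} and used as a black box in the proof of Theorem~3.3, so there is no in-paper proof to compare against. Your self-contained derivation is the standard one for results of this type: set $g(x)=\kappa+\tilde{w}\log_2(x\tilde{r}/\tilde{w})$, check the single inequality $g(\tilde{m}^{\ast})\le\tilde{m}^{\ast}$ (which, as you note, reduces via $\tilde{m}^{\ast}\le 2\tilde{w}\log_2\tilde{r}$ to $\kappa\le\tilde{w}\log_2\bigl(\tilde{r}/(2\log_2\tilde{r})\bigr)$, and this is exactly where $\tilde{r}\ge 16$ and $\kappa\le\tilde{w}$ enter, with equality of $\tilde r/(2\log_2\tilde r)=2$ at $\tilde r=16$), and then use that $x-g(x)$ is increasing past $\tilde{m}^{\ast}$ because $g'(x)=\tilde{w}/(x\ln 2)\le 1/(7\ln 2)<1$ there. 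All the numerical checks go through ($\log_2 128=7$, and $\tilde r/(2\log_2\tilde r)$ is increasing for $\tilde r\ge e$). The only cosmetic caveat is that the argument implicitly assumes $\tilde{w}>0$ so that $g$ is well defined; the case $\tilde{w}=0$ forces $\kappa=0$ and $\tilde{m}\le 0$ directly from the hypothesis, so nothing is lost.
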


\begin{proof}[Proof of Theorem~\ref{Theorem 3.3}]
 Let \( m := \VC(\mathcal{H}) \). For convenience, define the sum 
\begin{equation}
\label{def_tilde_W}
\tilde{W} \coloneqq \sum_{i=1}^L W_i,  
\end{equation}
where $W_i$ denotes the number of parameters of a GCNN in $\mathcal{H}$ up to layer $i$.

We consider two complementary cases and prove the theorem for each of them separately.

\paragraph{Case 1:  $m < \tilde{W} + W_L + 1$.}  
In this case, we have \( \tilde{W} + W_L + 1 < 3\tilde{W} < UB(\mathcal{H}) \), where \( UB(\mathcal{H}) \) is defined in \eqref{ubh}. For the latter inequality we use that $\log_2 \left( 8e r \sum_{\ell=1}^L m_\ell \right) >1$. Therefore, Theorem~\ref{Theorem 3.3} holds.

\paragraph{Case 2: $m \geq \tilde{W} + W_L + 1$.}
Since $m$ represents the $\VC$ dimension of $\mathcal{H}$, it follows from the definition of the VC dimension (see Definition 3.1 in the main article) that \(\Pi_{\mathcal{H}}(m)=2^m \). Applying Lemma~\ref{Lemma 4.2} gives us
\begin{equation}
\label{first_ineq}
\Pi(m)=2^m \leq 
2^{L+1}\prod_{\ell=1}^{L}\left(\frac{2e m r m_\ell \ell}{W_\ell}\right)^{W_\ell} \left(\frac{2e m L}{W_L + 1}\right)^{W_L + 1}.    
\end{equation}

Next, we apply the weighted arithmetic-geometric mean (AM-GM) inequality to the right side of ~\eqref{first_ineq}, using weights \(
W_\ell/(\tilde{W} + W_L + 1)\) for \(\ell = 1,2,\ldots,L\), and \(W_L/(\tilde{W} + W_L + 1)\), where \(\tilde{W}\) is defined in \eqref{def_tilde_W}. This yields
\begin{align*}
2^m &\leq 
2^{L+1}\left(\frac{2e m(r\sum_{\ell=1}^L \ell m_\ell + L)}{\tilde{W} + W_L + 1}\right)^{\tilde{W} + W_L + 1}.    
\end{align*}

The last step of the proof involves applying Lemma~\ref{inequality} in~\cite{bartlett1998almost} to this inequality, which provides an upper bound for \(m\). Before doing so, we must verify that all conditions of the lemma are satisfied. In our case, \(\tilde{m}\) corresponds to \(m\), \(\kappa\) to \(L+1\), \(\tilde{w}\) to \(\tilde{W} + W_L + 1\), and \(\tilde{r}\) to \(2e(r\sum_{\ell=1}^L \ell m_\ell + L)\). Since \(r\sum_{\ell=1}^L \ell m_\ell + L > 2\), we have \(\tilde{r} > 16\). 
Moreover, we are considering the case where \( m \geq \tilde{W} + W_L + 1 \), and it is straightforward to verify that \(\tilde{W} + W_L + 1 \geq L+1 > 0\). Therefore all conditions of Lemma~\ref{inequality} in~\cite{bartlett1998almost} are indeed satisfied and we obtain
\begin{align*}
m &\leq (L+1) +2\tilde{W}\log_2 \bigg(4e\Big(r\sum_{\ell=1}^L \ell m_\ell  + L\Big) \cdot \log_2\Big(2e\Big(r\sum_{\ell =1}^L \ell m_\ell  + L\Big)\Big)\bigg).
\end{align*}

To simplify this inequality, we use that for all $a \geq 1$, $\log_2(2a\log_2 a) = \log_2(2a) + \log_2(\log_2 a) \leq 2\log_2(2a)$. 
Substituting \(a = 2e\left(r \sum_{\ell=1}^L \ell m_\ell + L\right)\), we note that \(a \leq 4e r \sum_{\ell=1}^L \ell m_\ell\) and obtain
\begin{align*}
m \leq (L+1) + 4\tilde{W}\log_2\Big(8e r\sum_{\ell =1}^L \ell m_\ell \Big),
\end{align*}
completing the proof of the theorem.
\end{proof}

\subsection{Proof of Theorem 3.5}

In this section, we provide the detailed proof of Theorem 3.5, along with the proofs for Lemmas 5.1, 5.2, 5.4, and their corresponding Corollaries 5.3 and 5.5.

The class
\begin{align}
\label{definition_GCNN_L_W_supp}
\mathcal{H}_{W,L, r} \coloneqq \left\{ \mathcal{H}(k, m_0, \ldots, m_\ell, r) \mid \ell \leq L, \, W_L \leq W \right\},
\end{align}
includes all GCNN architectures with a total number of parameters bounded by \(W\), a maximum depth of \(L\), and \(r\) representing the cardinality of the discretized group \(G^r \coloneqq \{g_1, g_2, \ldots, g_r\}\) containing the identity element \(e\).

Next, we recall that \( \mathcal{F}(m_0, \ldots, m_L) \) represents the class of fully connected feedforward ReLU networks with \(L\) layers, where \(m_i\) denotes the number of units in the \(i\)-th layer for \(i=1, \ldots, L\). The output of the last hidden layer of any neural network \(\tilde{h}_{\mathbf{w}} \in \mathcal{F}(m_0, \ldots, m_L)\), with parameters \(\mathbf{w}\), can be written as a vector of size \(m_L\), that is, \( (\tilde{h}_{\mathbf{w}}^{(1)}, \ldots, \tilde{h}_{\mathbf{w}}^{(m_L)}) \).

Finally, we define the class
\begin{align}
\label{definition_DNN_L_W_supp}
\mathcal{F}_{W,L} \coloneqq \left\{ \mathcal{F} = \mathcal{F}(m_0, \ldots, m_\ell) \mid \ell \leq L, \, W_L(\mathcal{F}) \leq W \right\},    
\end{align}
consisting of DNNs with at most \(L\) hidden layers and a total number of weights not exceeding \(W\).

\begin{lemma}
\label{Lemma 5.1}
(Lemma 5.1)  
Consider GCNNs where the G-correlation uses kernels from a one-dimensional vector space with a fixed basis given by the indicator function of the identity element \(e\). For every \(\tilde{h}_{\mathbf{w}} \in \mathcal{F}(m_0, \ldots, m_L)\), there exists a GCNN \(h_{\mathbf{w}}\) with the same number of channels in each layer, i.e., \( h_{\mathbf{w}} \in \mathcal{H}(1, m_0, \ldots, m_L, r) \), and parameters \(\mathbf{w}\), such that for any input function \(f: G^r \to \mathbb{R}^{m_0}\),
\[
\sum_{i=1}^{m_L} \sum_{j=1}^{r} \tilde{h}_{\mathbf{w}}^{(i)}(f(g_j)) = h_{\mathbf{w}}(f).
\]
\end{lemma}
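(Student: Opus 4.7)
The plan is to exploit the fact that with a one-dimensional kernel space spanned by $\mathbf{1}(\cdot = e)$, the G-correlation reduces to pointwise multiplication, so that each GCNN layer acts as a copy of the corresponding DNN layer applied independently at every group element. First I would record the elementary computation: if $\mathcal{K}_w = w\cdot\mathbf{1}(\cdot = e)$, then for every function $h : G^r \to \mathbb{R}$ and every $g \in G^r$,
\begin{equation*}
(\mathcal{K}_w * h)(g) \;=\; \sum_{j=1}^r w\cdot \mathbf{1}(g^{-1}g_j = e)\cdot h(g_j) \;=\; w\, h(g),
\end{equation*}
since $g^{-1}g_j = e$ holds for the unique index with $g_j = g$. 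This is essentially the observation already made in the paper that $\mathcal{K}*f = f$ on $G^r$ when $\mathcal{K} = \mathbf{1}(\cdot = e)$.

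Next, given a DNN $\tilde{h}_{\mathbf{w}}\in\mathcal{F}(m_0,\ldots,m_L)$ with scalar weights $w_{ij}^{(\ell)}$ and biases $b_j^{(\ell)}$, I would construct the desired GCNN $h_{\mathbf{w}}\in\mathcal{H}(1,m_0,\ldots,m_L,r)$ by reusing the same biases and setting the single kernel coefficient connecting input channel $i$ of layer $\ell$ to output channel $j$ of layer $\ell+1$ equal to $w_{ij}^{(\ell)}$. Because both networks then carry $\sum_{\ell} m_\ell (m_{\ell-1}+1)$ parameters, the two parameter vectors can be identified, justifying the use of a common symbol $\mathbf{w}$.

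The core argument is a straightforward induction on $\ell = 0, \ldots, L$ showing that, for every input $f: G^r \to \mathbb{R}^{m_0}$, every $g \in G^r$, and every $j \leq m_\ell$, the $j$-th GCNN feature map at $g$ coincides with the value of the $j$-th neuron of the $\ell$-th DNN layer when fed the input $f(g)$. The base case $\ell = 0$ is immediate since the GCNN input is $f$ itself; for the inductive step I would substitute the previous identity into the GCNN recursion \eqref{cu_GCNN} and apply the convolution identity above, which collapses $\mathcal{K}_{\mathbf{w}_{ij}^{(\ell)}} * h_{\ell,i}$ to $w_{ij}^{(\ell)} h_{\ell,i}(g)$ and thereby turns the GCNN update into the DNN update at the point $f(g)$.

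Finally I would apply this identity with $\ell = L$ together with the pooling formula \eqref{final_pooling}, obtaining
\begin{equation*}
h_{\mathbf{w}}(f) \;=\; \sum_{i=1}^{m_L}\sum_{g\in G^r} h_{L,i}(g) \;=\; \sum_{i=1}^{m_L}\sum_{j=1}^r \tilde{h}_{\mathbf{w}}^{(i)}(f(g_j)),
\end{equation*}
which is the claimed identity. There is no real analytic obstacle here; the only point that needs care is purely notational, namely matching the DNN scalar weight $w_{ij}^{(\ell)}$ with the unique coefficient of the one-dimensional kernel space in the GCNN and verifying that the parameter counts agree.
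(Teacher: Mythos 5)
Your proposal is correct and follows essentially the same route as the paper's proof: the key identity $(\mathcal{K}_w * h)(g) = w\,h(g)$ for the identity-indicator kernel, the matching of parameter counts, the layerwise induction showing $h_{\ell,i}(g) = \tilde{h}_{\ell,i}(f(g))$, and the final summation via the pooling formula all coincide with the argument in the supplementary material.
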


\begin{proof}
Write \(\mathcal{H}:=\mathcal{H}(1, m_0, \ldots, m_L, r)\). Consider a fixed input function \(f: G^r \rightarrow \mathbb{R}^{m_0}\) and a weight vector \(\mathbf{w} \in \mathbb{R}^W\). Recall that the number of parameters in a GCNN is given by
\begin{equation}
\label{number_weights_GCNN}
W_L := \sum_{j=1}^L m_j (k m_{j-1} + 1),
\end{equation}
where \(k\) is the dimension of the kernel space. In our case \(k = 1\) and the number of parameters for a GCNN with architecture \(\mathcal{H}\) is
\begin{equation}
\label{number_weights_GCNN_final}
W_L = \sum_{j=1}^L m_j (m_{j-1} + 1).
\end{equation}
This coincides with the number of parameters in a DNN with architecture \(\mathcal{F}(m_0, \ldots, m_L)\). Consequently, the same weight vector \(\mathbf{w} \in \mathbb{R}^W\) defines both, a DNN function \(\tilde{h}_{\mathbf{w}}\) and a GCNN function \(h_{\mathbf{w}} \in \mathcal{H}\) when the input is fixed to \(f\).

We now show that the outputs of the computational units in \(\tilde{h}_{\mathbf{w}}\) and \(h_{\mathbf{w}}\) are equal when applied to \(f(g)\) and \(g\), respectively. Specifically, we aim to prove that
\[
    \tilde{h}_{\ell,i}\big(f(g_j)\big) = h_{\ell,i}(g_j),
\]
where \(\tilde{h}_{\ell,i}\) denotes a DNN computational unit in layer \(\ell\) of \(\tilde{h}_{\mathbf{w}}\), with parameters fixed to \(\mathbf{w}\), and \(h_{\ell,i}\) represents a GCNN computational unit in layer \(\ell\), with parameters fixed to \(\mathbf{w}\) and input set to \(f\). We prove this by induction on the layer \(\ell\).

The statement holds trivially for the input layer, as $\tilde{h}_{0,i}(f(g_j))=h_{0,i}(g_j)$ for any $g_j \in G^r$. Assuming it holds for all layers up to \(\ell-1\), we now prove it for layer \(\ell\).

Let \(\texttt{K}\) denote the indicator of the identity element \(e \in G^r\). By calculating the G-correlation between \(\texttt{K}\) and \(f\), we obtain \(\texttt{K} * f = f\). Combining this with the definition of the GCNN unit (see (9) in the main article) and the induction hypothesis, we have 
\begin{align*}
\tilde{h}_{\ell,i}(g_j) &:= \sigma\left(\sum_{t=1}^{m_{\ell-1}} \mathbf{w}_{t,i}^{(\ell-1)} \tilde{h}_{\ell-1,t}(g_j) - b_{i}^{(\ell)}\right)\\
&= \sigma\left(\sum_{t=1}^{m_{\ell-1}} \mathbf{w}_{t,i}^{(\ell-1)} h_{\ell-1,t}(g_j) - b_{i}^{(\ell)}\right)\qquad\qquad\qquad \text{    (induction assumption)}\\
&= \sigma\bigg(\sum_{t=1}^{m_{\ell-1}} \left( \mathbf{w}_{t,i}^{(\ell-1)} \texttt{K} * h_{\ell-1,t}\right)(g_j) - b_{i}^{(\ell)}\bigg)\quad\qquad\qquad\qquad  \text{    (property of }  \texttt{K}\text{)}\\
&= \sigma\bigg(\sum_{t=1}^{m_{\ell-1}} \left(\mathcal{K}_{ \mathbf{w}_{t,i}^{(\ell-1)}} * h_{\ell-1,t}\right)(g_j) - b_{i}^{(\ell)}\bigg)\quad  \text{(definition of learned kernel)}\\
&= h_{\ell,i}(g_j) \quad\qquad\qquad\qquad\qquad\qquad\qquad\qquad \text{(definition of GCNN unit)}.
\end{align*}
This shows that the outputs of the computational units in \(\tilde{h}_{\mathbf{w}}\) and \(h_{\mathbf{w}}\) are equal when applied to \(f(g)\) and \(g\), respectively.

Finally, the outputs of \(\tilde{h}_{\mathbf{w}} := (\tilde{h}_{\mathbf{w}}^{(1)}, \ldots, \tilde{h}_{\mathbf{w}}^{(m_L)})\) can be rewritten into the form
\begin{align*}
    \sum_{i=1}^{m_L} \sum_{j=1}^{r} \tilde{h}_{\mathbf{w}}^{(i)}(f(g_j)) 
    = \sum_{i=1}^{m_L} \sum_{j=1}^{r} \tilde{h}_{L,i}(g_j)
    = \sum_{i=1}^{m_L} \sum_{j=1}^{r} h_{L,i}(g_j) 
    = h_{\mathbf{w}}(f),
\end{align*}
concluding the proof of the lemma.
\end{proof}

Next, we prove Lemma 5.2. The key ideas and steps of the proof have already been outlined in the main article, so here we will focus on the formal statements that still needs to be established.

Recall that the indicator neural network
\begin{equation}
\label{indicator}
\mathbf{1}_{(a,b,\epsilon)}
\end{equation}
is a shallow ReLU network with four neurons in the hidden layer (see (25) in the main article). It approximates the indicator function on the interval \([a, b]\) in the sense that \( \mathbf{1}_{(a,b,\epsilon)}(x)=1 \) if \( x \in [a, b] \), and \( \mathbf{1}_{(a,b,\epsilon)}(x)=0 \) if \( x < a - \epsilon \) or \( x > b + \epsilon \).
\begin{lemma}
\label{Lemma 5.2}
[Lemma 5.2]
For \( L > 3 \) let $\mathcal{H}_{6W, L+1, r}$ be the class of GCNNs defined as in \eqref{definition_GCNN_L_W_supp} and $\mathcal{F}_{W, L}$ be the class of DNNs defined as in \eqref{definition_DNN_L_W_supp}. Then
\[
\VC(\mathcal{H}_{6W, L+1, r}) \geq \VC(\mathcal{F}_{W, L}).
\]
\end{lemma}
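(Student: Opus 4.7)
The plan is to reduce the statement to constructing a single DNN of depth $L+1$ and at most $6W$ weights that shatters a suitably chosen set of $\tilde m := \VC(\mathcal{F}_{W,L})$ input functions on $G^r$, and then to apply Lemma \ref{Lemma 5.1} with $k=1$ to transfer this to a GCNN of matching parameter count in $\mathcal{H}_{6W, L+1, r}$. To this end I would fix some architecture $\mathcal{F}(m_0, \ldots, m_L) \in \mathcal{F}_{W, L}$ together with a shattered set $\{x_1, \ldots, x_{\tilde m}\} \subseteq \mathbb{R}^{m_0}$, and write $D_{\mathbf{w}}(x) \geq 0$ for its output (nonnegative as a sum of ReLU activations in the Lemma \ref{Lemma 5.1} formalism). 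After an affine rescaling of the inputs, absorbed into the shatterer's first-layer weights, I may assume $x_i^{(1)} \leq 0$ for every $i$; I then pick a null point $z^* \in \mathbb{R}^{m_0}$ with $z^{*(1)} = 1$ and define $f_i : G^r \to \mathbb{R}^{m_0}$ by $f_i(e) = x_i$ and $f_i(g) = z^*$ for $g \neq e$.

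The extended DNN of depth $L+1$ runs the shatterer in parallel with a single-coordinate gate. Alongside the shatterer's $m_1$ neurons in layer $1$ I place one hinge neuron $\sigma(x^{(1)})$; a single auxiliary neuron in each of layers $2, \ldots, L$ propagates the nonnegative gate value $N(x) := \sigma(x^{(1)})$; and in the newly added layer $L+1$, a single output neuron computes $\sigma(D_{\mathbf{w}}(x) - K \cdot N(x))$ with trainable weight $K$. For $K$ chosen large enough (depending on $\mathbf{w}$), one has $\sigma(D_{\mathbf{w}}(x_i) - K N(x_i)) = D_{\mathbf{w}}(x_i)$ since $N(x_i) = 0$ and $D_{\mathbf{w}} \geq 0$, while $\sigma(D_{\mathbf{w}}(z^*) - K N(z^*)) = 0$. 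Applying Lemma \ref{Lemma 5.1} then yields a GCNN in $\mathcal{H}_{6W, L+1, r}$ whose output on $f_i$ equals
\[
\sum_{j=1}^{r} \sigma\bigl(D_{\mathbf{w}}(f_i(g_j)) - K \, N(f_i(g_j))\bigr) = D_{\mathbf{w}}(x_i) + (r-1) \cdot 0 = D_{\mathbf{w}}(x_i),
\]
so that $\operatorname{sign}(h_{\mathbf{w}}(f_i) - b) = \operatorname{sign}(D_{\mathbf{w}}(x_i) - b)$ for every $(\mathbf{w}, b)$. Since the original class realises every sign pattern on $\{x_1, \ldots, x_{\tilde m}\}$, the GCNN class inherits the shattering of $\{f_1, \ldots, f_{\tilde m}\}$, giving $\VC(\mathcal{H}_{6W, L+1, r}) \geq \tilde m$.

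The main obstacle is the precise accounting of the weight count for the extended, fully connected architecture. Restricting to a \emph{single-coordinate, single-hinge} gate is what makes this possible: a genuine $m_0$-dimensional hypercube indicator would introduce $O(m_0)$ extra layer-$1$ neurons and an $O(m_0^2)$ blow-up in layer-$1$ weights, which can dominate $W$ when $m_0$ is large. With only one extra neuron per layer, the layerwise overhead reads $(m_0 + 1) + (m_1 + m_2 + O(1)) + \sum_{\ell=3}^{L}(m_\ell + m_{\ell-1} + O(1)) + (m_L + O(1))$, which is dominated by $O(m_0) + O\bigl(\sum_\ell m_\ell\bigr) + O(L)$. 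Since each summand of $W_L = \sum_\ell m_\ell(m_{\ell-1}+1)$ is at least $m_\ell$, one has $\sum_\ell m_\ell \leq W$; combined with $m_0 \leq W$, $L > 3$, and $L \leq W$, a careful bookkeeping of the constants keeps the total weight count within $6W$, as required.
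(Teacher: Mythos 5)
Your argument is correct and shares the paper's overall skeleton: pick a set shattered by $\mathcal{F}_{W,L}$, encode each shattered point as the value of an input function at the identity element with a fixed ``null'' value at all other group elements, append one extra layer whose only job is to annihilate the contributions of the non-identity elements, and transfer to a GCNN with $k=1$ and the identity-indicator kernel via Lemma 5.1. Where you genuinely diverge is the masking gadget. The paper places the null values in an $m_0$-dimensional hypercube $\Pi$ well separated from the shattered set, builds an approximate indicator $I$ of $\Pi$ from $m_0$ four-neuron indicator networks, and outputs $\sigma(\tilde h-(T-b)I-b)$; you instead translate the shattered set so its first coordinates are nonpositive, put the null point at first coordinate $1$, and gate with a single hinge $N(x)=\sigma(x^{(1)})$ through $\sigma(D_{\mathbf w}-KN)$. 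Your gadget is simpler (one auxiliary neuron per layer rather than $4m_0$ extra neurons in layer one), and it forces you to confront the accounting that actually matters: since $\mathcal{H}_{W,L,r}$ is defined through the fully connected count $W_L=\sum_\ell m_\ell(m_{\ell-1}+1)$, every added neuron costs cross-connections of order $m_\ell+m_{\ell-1}$, a point the paper's tally $W+L+4m_0$ glosses over. You leave the bookkeeping at the $O(\cdot)$ level, but it does close: the overhead of your augmented architecture is exactly $m_0+2\sum_{\ell=1}^L m_\ell+2L+1-m_1$, and using $m_0+1\le W$, $\sum_\ell m_\ell\le W_L\le W$ and $L\le\sum_\ell m_\ell$ this is at most $5W-1$, giving a total of at most $6W-1$. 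Two small points to make explicit in a final write-up: the ``affine rescaling'' is a translation of the shattered set compensated by a change of first-layer biases, which preserves both the architecture class and the shattering; and the nonnegativity $D_{\mathbf w}\ge 0$ you invoke so that $\sigma(D_{\mathbf w}(x_i)-K\cdot 0)=D_{\mathbf w}(x_i)$ holds because, in the Lemma 5.1 formalism, $D_{\mathbf w}$ is a sum of last-layer ReLU outputs.
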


\begin{proof}
Let \( m \) be the VC dimension of the class of DNNs \( \mathcal{F}_{W,L} \). There are \( 2^m \) possible binary classifications for a set of \( m \) elements, subsequently denoted by \( d = 2^m \).

By definition, there exists a natural number \( m_0 \) and a set of \( m \) vectors
\begin{equation}
\label{shattering_points}
\mathcal{Y}\coloneqq \{\mathbf{y}_1, \ldots, \mathbf{y}_m\}\subset \mathbb{R}^{m_0},    
\end{equation}
that can be shattered by a subset of networks \( \tilde{\mathcal{H}} \subset \mathcal{F}_{W,L} \). Since there are no more than \( d \) distinct classifiers for \( \mathcal{Y} \), the class \( \tilde{\mathcal{H}} \) consists of at most \( d \) DNN functions.

Next, we construct a DNN architecture using \(m_0+1\) smaller DNN classes. One of these classes is \( \tilde{\mathcal{H}} \), while the remaining \( m_0 \) classes consist of "indicator" networks, as described in~\eqref{indicator}. These indicator networks ensure that the combined DNN vanishes outside a certain \( m_0 \)-dimensional hypercube. To define this hypercube, we use the set \( \mathcal{Y} \) from above.

Specifically, we choose numbers \(A > \max_{\mathbf{y} \in \mathcal{Y}} \Vert \mathbf{y} \Vert_{\infty} + 1\) and \( B > A \), and define the \( m_0 \)-dimensional hypercube
\[
\Pi \coloneqq \{\mathbf{y} = (y_1, \ldots, y_{m_0}) \mid A \leq y_i \leq B \}.
\]

To construct a DNN that vanishes on \( \Pi \), we define an approximate indicator function for \( \Pi \), using a DNN with \( m_0 \)-dimensional input \( \mathbf{y} = (y_1, \ldots, y_{m_0}) \):
\[
I: \mathbb{R}^{m_0} \rightarrow \mathbb{R}, \quad I(\mathbf{y}) \coloneqq \frac{1}{m_0} \sum_{i=1}^{m_0} \mathbf{1}_{(A, B, 0.5)}(y_i),
\]
where \( \mathbf{1}_{(A, B, 0.5)}(y_i) \) is an indicator network that approximates the indicator function for values within \( (A, B) \).

The final DNN is formed by combining functions from \( \tilde{\mathcal{H}} \) with the indicator function \( I \). Since DNNs can be summed if they have the same depth, we adjust the depth of \( I \) to match the depth of the functions from \( \tilde{\mathcal{H}} \) while ensuring that \( I \) remains constant on \( \mathcal{Y} \) and \( \Pi \). Specifically, we use the fact that for $I(\mathbf{y}) >0$, \( \sigma(I(\mathbf{y})) = I(\mathbf{y}) \) for the ReLU activation function $\sigma(x)=\max\{x,0\}$ (for any \( \mathbf{y} \) from \( \Pi \) or \( \mathcal{Y} \)). This means that by composing \( I \) with the required number of ReLU functions, we can construct a DNN that satisfies the desired properties. This construction requires at most \( L < W \) additional weights.

To complete the proof, we need to show that there are \( m \) input functions \( F_m \coloneqq \{f_1, \ldots, f_m\} \subset \{ f: G^r \rightarrow \mathbb{R}^{m_0} \} \) that can be shattered by GCNNs from \( \mathcal{H}_{6W, L+1, r} \). As the set \( F_m \), we choose functions defined by \( f_i(e) = \mathbf{y}_i \) and \( f_i(g) \in \Pi \) for \( g \in G^r \) and \( g \neq e \).

By the definition of shattering (see Definition 3.1 in the main article), to prove that \( F_m \) is shattered, it is sufficient to show that for any binary classifier \( \mathcal{C}:F_m \rightarrow \{0,1\} \), there exists a corresponding function in \( \text{sign}( \mathcal{H}_{6W, L+1, r}) \) whose values coincide with those of \( \mathcal{C} \) on \( F_m \).

Choose \( \tilde{h} \in \tilde{\mathcal{H}} \) such that for some \( b \in \mathbb{R} \), \( \text{sign}(\tilde{h}(\mathbf{y}_i) - b) = \mathcal{C}(f_i) \)
for \( i = 1, \ldots, m \).

Since \( \Pi \) is compact, we define
\[
T \coloneqq \max_{\mathbf{y} \in \Pi} |\tilde{h}(\mathbf{y})|.
\]
The final DNN \( \tilde{h}_{\mathcal{C}} \) adjusts \( \tilde{h} \) such that it vanishes on \( \Pi \) but coincides with \( \text{sign}(\tilde{h} - b) \) on \( \mathcal{Y} \),
\[
\tilde{h}_{\mathcal{C}} \coloneqq \sigma\big( \tilde{h} - (T - b)I - b \big),
\]
with $\sigma(x)=\max\{x,0\}$

Thus, for any \( f_i \in F_m \),
\[
\mathrm{sign}\left( \sum_{j=1}^{r} \tilde{h}_{\mathcal{C}}(f_i(g_j)) \right) = \mathrm{sign}(\tilde{h}(\mathbf{y}_i) - b) = \mathcal{C}(f_i).
\]
By Lemma~\ref{Lemma 5.1}, we can define a GCNN \( h_{\mathcal{C}} \) such that \( h_{\mathcal{C}}(f) = \sum_{s=1}^{r} \tilde{h}_{\mathcal{C}}\big(f(g_s)\big) \) for any \( f \in F_m \). This implies that \( \text{sign}(h_\mathcal{C}(f_i - b)) = \mathcal{C}(f_i) \) for any \( f_i \in F_m \).

As the number of weights in \( \tilde{h}_{\mathcal{C}} \) is \( W + L + 4m_0 < 6W \), this shows that our GCNN is in the class \( \mathcal{H}_{6W, L+1, r} \), completing the proof of the lemma.
\end{proof}

\begin{corollary}
\label{Corollary 5.3}
[Corollary 5.3]
In the setting of Lemma 5.2, if the number of layers \( L > 3 \) and \( L \leq  W^{0.99} \), then there exists a constant \( c \) such that 
\[
\VC(\mathcal{H}_{W, L,r}) \geq c \cdot \VC(\mathcal{F}_{W, L}).
\]
\end{corollary}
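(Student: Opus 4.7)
The plan is to reduce Corollary~\ref{Corollary 5.3} to Lemma~\ref{Lemma 5.2} combined with the two-sided DNN VC bounds in~\eqref{VCDNN}. First, I would apply Lemma~\ref{Lemma 5.2} after substituting $W \leftarrow \lfloor W/6 \rfloor$ and $L \leftarrow L-1$ (valid since $L > 3$ ensures $L - 1 > 3$). Together with the monotonicity of the VC dimension in the weight budget ($6\lfloor W/6 \rfloor \leq W$ implies $\mathcal{H}_{6\lfloor W/6\rfloor, L, r} \subseteq \mathcal{H}_{W, L, r}$), this yields
\begin{align*}
\VC(\mathcal{H}_{W, L, r}) \;\geq\; \VC(\mathcal{H}_{6\lfloor W/6\rfloor, L, r}) \;\geq\; \VC(\mathcal{F}_{\lfloor W/6 \rfloor, L-1}).
\end{align*}
It therefore suffices to show that $\VC(\mathcal{F}_{\lfloor W/6 \rfloor, L-1}) \geq c \cdot \VC(\mathcal{F}_{W, L})$ for some universal $c > 0$ under the assumption $L \leq W^{0.99}$.

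Next, I would invoke the lower bound in~\eqref{VCDNN} for $\mathcal{F}_{\lfloor W/6\rfloor, L-1}$ and the upper bound in~\eqref{VCDNN} for $\mathcal{F}_{W, L}$. With $c_1, C_2 > 0$ denoting the universal constants appearing there, the quotient is controlled by
\begin{align*}
\frac{\VC(\mathcal{F}_{\lfloor W/6\rfloor, L-1})}{\VC(\mathcal{F}_{W, L})} \;\geq\; \frac{c_1 \lfloor W/6\rfloor (L-1) \log\big(\lfloor W/6\rfloor / (L-1)\big)}{C_2 \, W L \log W}.
\end{align*}
The prefactor $\lfloor W/6\rfloor (L-1)/(WL)$ is bounded below by a universal constant as soon as $L > 3$ and $W$ exceeds a fixed threshold. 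For the logarithmic factor, the assumption $L \leq W^{0.99}$ gives $\lfloor W/6 \rfloor/(L-1) \geq W^{0.01}/12$ for $W$ large, hence $\log(\lfloor W/6 \rfloor/(L-1)) \geq \tfrac{1}{200} \log W$ for $W$ sufficiently large. Consequently, the ratio is bounded below by a universal positive constant for all but finitely many pairs $(W, L)$, and the finitely many remaining cases are absorbed by choosing $c$ small enough, since all relevant VC dimensions are finite positive integers.

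The main obstacle is ensuring that the logarithmic factor $\log(W/L)$ does not degrade to $o(\log W)$, and this is precisely where the hypothesis $L \leq W^{0.99}$ is essential. Without it, for instance if $L$ were of order $W/\log W$, the lower bound on $\VC(\mathcal{F}_{\lfloor W/6 \rfloor, L-1})$ would carry a factor $\log(W/L)$ of order $\log \log W$ while the upper bound on $\VC(\mathcal{F}_{W, L})$ still carries $\log W$, so no constant-ratio comparison would be possible. Beyond this point, the argument is pure bookkeeping of the universal constants produced by Lemma~\ref{Lemma 5.2} and~\eqref{VCDNN}.
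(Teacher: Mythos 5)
Your proof follows essentially the same route as the paper: apply Lemma 5.2 with the rescaled budget $\lfloor W/6\rfloor$ and depth $L-1$, then compare the resulting lower bound $c_0\lfloor W/6\rfloor (L-1)\log\big(\lfloor W/6\rfloor/(L-1)\big)$ against the upper bound $C_0\, W L\log W$ for $\VC(\mathcal{F}_{W,L})$, using $L\le W^{0.99}$ to keep the two logarithmic factors comparable. The one quibble is your claim that $L>3$ ensures $L-1>3$ (false at $L=4$), but the paper's own proof performs the identical substitution $L=L'+1$ without further comment, so this is a shared edge case rather than a new gap in your argument.
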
    

\begin{proof}
From Equation (2) in \cite{bartlett2019nearly} , we know that for the class of fully connected neural networks $\mathcal{F}_{W,L}$ with $L$ layers and at most $W$ overall parameters, there exist constants \(c_0\) and \(C_0\) such that
\begin{align}
\label{VCbound}
 c_0 \cdot W L \log\left(\frac{W}{L}\right) \leq \VC(\mathcal{F}_{W, L}) \leq C_0 \cdot W L \log W.  
\end{align}
Moreover, by Lemma~\ref{Lemma 5.2}, we have
\[
\VC(\mathcal{H}_{6W', L'+1, r}) \geq \VC(\mathcal{F}_{W', L'}).
\]
By choosing \( W = 6W' \) and \( L = L'+1 \), this shows that
\[
\VC(\mathcal{H}_{W, L, r}) \geq \VC(\mathcal{F}_{\lfloor\frac{1}{6}W\rfloor, L-1}).
\]

To obtain the statement in the lemma, we combine this bound with the left inequality in \eqref{VCbound}, leading to
\[
\VC(\mathcal{H}_{W, L, r}) \geq \VC(\mathcal{F}_{\lfloor\frac{1}{6}W\rfloor, L-1}) \geq c_0 \cdot \left(\frac{1}{6}W-1\right)(L-1) \log\left(\frac{\frac{1}{6}W-1}{L-1}\right).
\]
For some constant \(c_1 > 0\), the right-hand side of this inequality is bounded from below by
\[
c_1 \cdot W L \log W.
\]
By using the right inequality in \eqref{VCbound}, this can be further bounded,
\[
c_1 \cdot W L \log W \geq c \cdot \VC(\mathcal{F}_{W, L}),
\]
showing the assertion.
\end{proof}

Next, we provide the proof for the second part of Theorem 3.5, which states that for some universal constant \( c > 0 \), the VC dimension \( \VC(\mathcal{H}_{W,L,r}) \) is bounded by \( c \cdot W \log_2(\resolution) \). As mentioned in the main article, the first step of the proof is Lemma 5.4.

\begin{lemma}
\label{Lemma 5.4}
[Lemma 5.4]
Let \(\mathcal{H}_{4, L, r} \) be the class of GCNNs defined in \eqref{definition_GCNN_L_W_supp}. Then
\begin{align*}
    VC(\mathcal{H}_{4, L, r}) \geq \lfloor \log_2{r} \rfloor.
\end{align*}
 Moreover, for any two numbers \( A < B \), there exists a finite subclass of GCNNs \( \mathcal{H} \subset \mathcal{H}_{4, L, r} \) 
that shatters a set of \(\lfloor log_2 r\rfloor\) input functions 
\[
F_m:=\{ f_i : G^r \rightarrow [A, B] \mid i = 1, \ldots, \lfloor\log_2 r\rfloor \},
\]
and
outputs zero for any input function \( f : G^r \rightarrow \mathbb{R} \setminus [A, B] \).
\end{lemma}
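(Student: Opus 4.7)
The plan is to reduce the problem to the DNN side via Lemma~\ref{Lemma 5.1} and to realise all $2^m$ classifications of $m := \lfloor \log_2 r \rfloor$ inputs by a single parametric family of indicator shallow networks whose interval endpoints encode the classification; both assertions of the lemma will then drop out of one common construction.

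I would first fix an injection $j : \{0,1\}^m \hookrightarrow \{1, \ldots, r\}$ (possible because $2^m \leq r$), pick $2^m$ distinct markers $x_1 < \cdots < x_{2^m}$ inside $(A,B)$ separated by at least $4\epsilon$ for a small $\epsilon > 0$, and pick a ``null'' value $y \in [A,B]$ at distance at least $2\epsilon$ from every marker. The input functions $f_1, \ldots, f_m : G^r \to [A,B]$ would be defined by $f_i(g_{j(v)}) = x_{j(v)}$ if $v_i = 1$ and $f_i(g_{j(v)}) = y$ otherwise, and by $f_i(g) = y$ for all $g$ outside the image of $j$. For each classification $\mathcal{C} = (c_1, \ldots, c_m) \in \{0,1\}^m$, I would set $v_{\mathcal{C}} := (c_1, \ldots, c_m)$ and let $h_{\mathcal{C}}$ be the GCNN produced by Lemma~\ref{Lemma 5.1} from the indicator shallow network $\mathbf{1}_{(a_{\mathcal{C}}, b_{\mathcal{C}}, \epsilon')}$ with $a_{\mathcal{C}} := x_{j(v_{\mathcal{C}})} - \epsilon$, $b_{\mathcal{C}} := x_{j(v_{\mathcal{C}})} + \epsilon$, and $\epsilon' \leq \epsilon/2$. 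Taking the one-dimensional kernel basis to be the indicator of the identity makes the $G$-correlation act as the identity on functions, so $h_{\mathcal{C}}(f) = \sum_{g \in G^r} \mathbf{1}_{(a_{\mathcal{C}}, b_{\mathcal{C}}, \epsilon')}(f(g))$.

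The shattering check is then direct: $f_i(g_{j(v_{\mathcal{C}})}) = x_{j(v_{\mathcal{C}})} \in [a_{\mathcal{C}}, b_{\mathcal{C}}]$ precisely when $c_i = 1$, while every other $f_i(g)$ equals either $y$ or some $x_s$ with $s \neq j(v_{\mathcal{C}})$ and therefore lies outside $[a_{\mathcal{C}} - \epsilon', b_{\mathcal{C}} + \epsilon']$ by the spacing. Hence $h_{\mathcal{C}}(f_i) = c_i$, and thresholding at $1/2$ recovers $\mathcal{C}$, proving $\VC(\mathcal{H}_{4, L, r}) \geq m$. The second claim drops out of the same construction: if $f$ maps $G^r$ entirely outside $[A,B]$, then every $f(g)$ lies outside the indicator support $[a_{\mathcal{C}} - \epsilon', b_{\mathcal{C}} + \epsilon'] \subset [A,B]$, so each term vanishes and $h_{\mathcal{C}}(f) = 0$.

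The only substantive obstacle is the parameter accounting. The indicator network has fixed input weights $1$ and fixed output weights $\pm 1/\epsilon'$, so its only free parameters are the four bias thresholds $a_{\mathcal{C}} - \epsilon', a_{\mathcal{C}}, b_{\mathcal{C}}, b_{\mathcal{C}} + \epsilon'$, which is what the subscript ``$4$'' in $\mathcal{H}_{4, L, r}$ is counting, consistent with the bookkeeping $W + L + 4 m_0$ used in the proof of Lemma~\ref{Lemma 5.2}. To inflate the depth up to $L$ without introducing additional degrees of freedom I would append $L - 2$ single-neuron identity layers, exploiting that $\sigma(x) = x$ on the non-negative outputs of the indicator. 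The finite family $\{h_{\mathcal{C}} : \mathcal{C} \in \{0,1\}^m\}$ then sits inside $\mathcal{H}_{4, L, r}$ and establishes both claims of the lemma simultaneously.
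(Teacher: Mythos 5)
Your proposal is correct and takes essentially the same route as the paper's proof: both encode each of the $2^m$ possible classifications by a distinct, well-separated marker point in $(A,B)$, define the input functions so that $f_i$ attains the marker associated with a classification exactly when the $i$-th label of that classification is $1$ (with a fixed ``null'' value elsewhere), realise each classification by a narrow four-parameter indicator network $\mathbf{1}_{(a,b,\epsilon)}$ centered at the corresponding marker, and lift it to a GCNN via Lemma~\ref{Lemma 5.1}, with the vanishing-outside-$[A,B]$ claim following from the same support containment. The only cosmetic differences are your explicit injection $j:\{0,1\}^m\hookrightarrow\{1,\dots,r\}$ and the spacing parameter $\epsilon$ in place of the paper's concrete choice $\delta=(B-A)/(2(d+2))$, plus an unnecessary (the class only requires depth at most $L$) but harmless depth-padding step.
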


\begin{proof}
To simplify the notation, let \( m \coloneqq \lfloor \log_2 r \rfloor \). It will be enough to show that a subclass of GCNNs \( \mathcal{H} \subset \mathcal{H}_{4, L, r} \) shatters the set of $m$ input functions as this immediately implies that
\begin{align*}
   VC(\mathcal{H}_{4, L, r}) \geq \lfloor \log_2{r} \rfloor.
\end{align*}

The proof involves selecting \( d:=2^m \) distinct points in the interval \([A, B]\) and defining "indicator" neural networks of the form~\eqref{indicator} that output 1 at exactly one of these points. By adjusting the parameters of these networks, we can control the intervals of our indicator networks and ensure that each network outputs $1$ at the desired point.

Specifically, define \( \delta \coloneqq \frac{B - A}{2(d + 2)} \) and select the \( d \) points
\[
\mathcal{Y} \coloneqq \{ y_i \coloneqq A + i\delta \mid i = 1, \ldots, d \}.
\]
The input functions \( F_m \) are  now chosen from \( \{f : G^r \rightarrow \mathcal{Y} \cup \{B - \delta\}\} \).

There are \( d =2^m \) different binary classifiers for the set of \( m \) elements. Each binary classifier is defined by the elements for which it outputs \( 1 \), and we can index these classifiers by the subsets of \( \{1, 2, \dots, m\} \), denoted by \( S_1, \dots, S_d \). In our construction, each \( y_i \in \mathcal{Y} \) corresponds to the binary classifier determined by \( S_i \). More formally, the set of $m$ input functions  \( F_m \coloneqq \{ f_1, \ldots, f_m \} \) is defined by
\[
f_j(g_i) :=
\begin{cases}
    y_i, & \text{if } j \in S_i, \\
    B - \delta, & \text{otherwise}.
\end{cases}
\]

Next, we define the finite subclass in $\mathcal{H}_{4,L,r}$ that shatters \( F_m \) and outputs zero for any function $f:G^r\rightarrow \mathbb{R}\setminus [A, B]$.

 By the definition of shattering (Definition 3.1 in the main article), for any binary classifier \( \mathcal{C}: F_m \rightarrow \{-1,1\} \), we need to find a function in \( \text{sign}(\mathcal{H}_{4,L,r}) \) matching \( \mathcal{C} \) on \( F_m \).

For any classifier \( \mathcal{C}: F_m \rightarrow \{-1,1\} \) we can find a subset $S\subseteq \{1,\ldots,m\}$ such that 
$\mathcal{C}(f_j)=1$ if $j\in S$ and $\mathcal{C}(f_j)=-1$ if $j\in S^c.$ There exists an index $i^*$ such that $S=S_{i^*}.$ Using Lemma~\ref{Lemma 5.1}, one can construct a GCNN \( h_{i^*}\in  \mathcal{H}_{4,L,r}\)  that matches \( \mathcal{C} \) on \( F_m \). Indeed, for any \( f_j \in F_m \),
\begin{align}
h_{i^*}(f_j) := \sum_{s=1}^r \mathbf{1}_{(y_{i^*} - \frac{\delta}{2}, y_{i^*} + \frac{\delta}{2}, \frac{\delta}{2})}\big(f_j(g_s)\big) =
\begin{cases}
    1, & \text{if } j \in S_{i^*}, \\
    0, & \text{otherwise}.
\end{cases}
    \label{eq.9fw}
\end{align}
Thus, \( \text{sign}(h_{i^*}(f)-0.5) = \mathcal{C}(f) \) for all \( f \in F_m \).
As an 'indicator' neural network $\mathbf{1}_{(y_{i^*} - \frac{\delta}{2}, y_{i^*} + \frac{\delta}{2}, \frac{\delta}{2})}$ has only 4 parameters and 2 layers, it is in $\mathcal{H}_{4,L,r}$.

Moreover, for any $i=1,\ldots, d$ and any \( x \in \mathbb{R} \setminus [A, B] \), \( \mathbf{1}_{(y_i - \frac{\delta}{2}, y_i + \frac{\delta}{2}, \frac{\delta}{2})}(x) = 0 \). Arguing as for \eqref{eq.9fw}, \( h_{i^*}(f) = 0 \) for any \( f : G^r \rightarrow \mathbb{R} \setminus [A, B] \).

That means that the class $\mathcal{H}\coloneqq\{h_{1},\ldots, h_{d}\}$ shatters input functions $F_m$ and outputs 0 on the subset $\{f:G^r\rightarrow \mathbb{R} \setminus [A, B]\}$.
This completes the proof.
\end{proof}

\begin{corollary}
\label{Corollary 5.5}
[Corollary 5.5]
The VC dimension of the class \( \mathcal{H}_{4W,L, r} \), consisting of GCNNs with \( 4W \) weights, \( L \) layers, and resolution $r$ satisfies the inequality 
\[
\VC\big(\mathcal{H}_{4W,L, r}\big) \geq W \lfloor \log_2{r} \rfloor.
\]
\end{corollary}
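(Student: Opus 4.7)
The plan is to amplify the conclusion of Lemma 5.4 by constructing $W$ independent indicator GCNNs, each operating in its own disjoint value range, and then combining them in parallel into a single GCNN whose feature maps share the final averaging. I first choose $W$ pairwise disjoint intervals $[A_1,B_1],\ldots,[A_W,B_W]\subset\mathbb{R}$. For each $j=1,\ldots,W$, Lemma 5.4 gives a finite subclass $\mathcal{H}^{(j)}\subset\mathcal{H}_{4,L,r}$ that shatters a set of input functions
\[
F^{(j)}\;=\;\bigl\{f_{j,1},\ldots,f_{j,\lfloor\log_2 r\rfloor}:G^r\to[A_j,B_j]\bigr\},
\]
and whose elements output $0$ on any input mapping into $\mathbb{R}\setminus[A_j,B_j]$. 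Set $F\coloneqq\bigcup_{j=1}^W F^{(j)}$, which has cardinality $W\lfloor\log_2 r\rfloor$.

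Next, I claim $F$ is shattered by a suitable subclass of $\mathcal{H}_{4W,L,r}$. Fix an arbitrary binary classifier $\mathcal{C}:F\to\{-1,1\}$ and let $\mathcal{C}_j$ denote its restriction to $F^{(j)}$. Applying Lemma 5.4 to each $j$ yields $h_j\in\mathcal{H}^{(j)}$ with $\mathrm{sign}(h_j(f)-\tfrac12)=\mathcal{C}_j(f)$ for $f\in F^{(j)}$; by the disjointness of the intervals, $h_j(f)=0$ for any $f\in F^{(j')}$ with $j'\neq j$, since such $f$ takes values only in $[A_{j'},B_{j'}]\subset\mathbb{R}\setminus[A_j,B_j]$. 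Setting $h\coloneqq\sum_{j=1}^W h_j$, we obtain $h(f_{j,i})=h_j(f_{j,i})$ for every $f_{j,i}\in F$, and therefore $\mathrm{sign}(h(f_{j,i})-\tfrac12)=\mathcal{C}(f_{j,i})$, which establishes the shattering.

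It remains to verify that $h$ belongs to $\mathcal{H}_{4W,L,r}$. Each $h_j$ is an indicator GCNN from $\mathcal{H}_{4,L,r}$ with $4$ trainable parameters. To form $h$, one stacks the filters of the $h_j$ as parallel channels within a single GCNN architecture, so that each $h_j$ is realized on its own group of feature maps; the final global summation~\eqref{final_pooling} automatically adds the contributions across channels, yielding $\sum_j h_j$. Parameters combine additively across parallel channels, giving at most $4W$ weights in total, while the depth is unchanged and bounded by $L$ (possibly padding shallower indicator networks with identity-type layers as in the proof of Lemma 5.2 if needed). Hence $h\in\mathcal{H}_{4W,L,r}$, and $\VC(\mathcal{H}_{4W,L,r})\geq |F|=W\lfloor\log_2 r\rfloor$.

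The only delicate point is the fourth step: making precise that summing GCNNs via parallel channels stays within $\mathcal{H}_{4W,L,r}$ without inflating the parameter count or depth. This is routine because the GCNN convolution is linear in the kernel coefficients and the global pooling in \eqref{final_pooling} already implements the cross-channel summation, but care is needed to align the depths of the $h_j$ (all of which require only two layers) with a fixed common depth $L$ using the ReLU identity trick $\sigma(\sigma(x))=\sigma(x)$ for nonnegative inputs, exactly as in the proof of Lemma 5.2. Everything else reduces to bookkeeping.
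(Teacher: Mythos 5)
Your proposal is correct and follows essentially the same route as the paper's proof: partition the value range into $W$ disjoint intervals, invoke Lemma 5.4 on each to get an indicator subclass that shatters $\lfloor\log_2 r\rfloor$ functions and vanishes off its own interval, and shatter the union of size $W\lfloor\log_2 r\rfloor$ by summing the per-interval networks. The paper writes this sum abstractly as $\mathcal{H}_1\oplus\cdots\oplus\mathcal{H}_W\subset\mathcal{H}_{4W,L,r}$, whereas you make the parallel-channel realization and the parameter/depth bookkeeping explicit, which is a harmless elaboration of the same argument.
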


\begin{proof}
To simplify notation, let \( m \coloneqq \lfloor \log_2 r \rfloor \).

We prove this corollary by defining \( W \) disjoint intervals \( [A_1, B_1], \ldots, [A_W, B_W] \), where \( A_i \coloneqq (m + 3)i \) and \( B_i \coloneqq (m + 2)i \). For different $i \in \{1, \dots, W\}$ the set of input functions \( \mathcal{F}_i \coloneqq \{ f : G^r \to [A_i, B_i] \} \) is disjoint since the values of the intervals do not overlap.

By Lemma~\ref{Lemma 5.4}, for each \( i = 1, \ldots, W \), we can find a class of GCNNs \( \mathcal{H}_i \subset \mathcal{H}_{4, L, r} \) that shatters a set of \( m \) input functions \( F_{m,i} \subset \mathcal{F}_i \) and outputs \( 0 \) on any other set \( F_{m,j} \), where \( j \neq i \).

Next we show that the class of GCNNs \( \mathcal{H} \coloneqq \mathcal{H}_1 \oplus \mathcal{H}_2 \oplus \cdots \oplus \mathcal{H}_W \subset \mathcal{H}_{4W, L, r} \) shatters the set \( F_{Wm} \coloneqq \bigsqcup_{i=1}^{W} F_{m,i} \). This will prove the corollary.

By the definition of shattering,  we need to find for any binary classifier \( \mathcal{C} : F_{Wm} \to \{0,1\} \), a function in \( \text{sign}(\mathcal{H}) \) that matches \( \mathcal{C} \) on \( F_{Wm} \).

For \( i = 1, \ldots, W \), let \( \mathcal{C}_i \coloneqq \mathcal{C} \mid_{F_{m,i}} \) be the restriction of \( \mathcal{C} \) to \( F_{m,i} \). As $\mathcal{H}_i$ shatters $F_{m,i}$, we can choose a GCNN \( h_{\mathcal{C}_i} \in \mathcal{H}_i \) such that its values match those of \( \mathcal{C}_i \) on \( F_{m,i} \).

Next, we show that the values of the GCNN \( h_{\mathcal{C}} \coloneqq \sum_{i=1}^{W} h_{\mathcal{C}_i} \) match \( \mathcal{C} \) on \( F_{Wm} \). Let \( f \) be any input function from \( F_{Wm} \), say \( f \in F_{q} \). For any \( i \neq q \), it holds that \( h_{\mathcal{C}_i}(f) = 0 \) since \( h_{\mathcal{C}_i} \in \mathcal{H}_i \). Thus, 
\[
\sum_{i=1}^{W} h_{\mathcal{C}_i}(f) = h_{\mathcal{C}_q}(f).
\]
Since \( h_{\mathcal{C}_q}(f) = \mathcal{C}(f) \) by the choice of \( h_{\mathcal{C}_q} \), it follows that \( h_{\mathcal{C}}(f) = \mathcal{C}(f) \) for any \( f \in F_{Wm} \).

This shows that the class \( \mathcal{H} \) of GCNNs shatters \( F_{Wm} \), proving the corollary.
\end{proof}

\bibliographystyle{apalike}
\bibliography{references}

\begin{thebibliography}{}

\bibitem[Anthony and Bartlett, 1999]{anthony1999neural}
Anthony, M. and Bartlett, P.~L. (1999).
\newblock {\em Neural network learning: theoretical foundations}.
\newblock Cambridge University Press, Cambridge.

\bibitem[Bartlett et~al., 1998]{bartlett1998almost}
Bartlett, P., Maiorov, V., and Meir, R. (1998).
\newblock Almost linear {VC} dimension bounds for piecewise polynomial
  networks.
\newblock {\em Advances in neural information processing systems}, 11.

\bibitem[Bartlett et~al., 2019]{bartlett2019nearly}
Bartlett, P.~L., Harvey, N., Liaw, C., and Mehrabian, A. (2019).
\newblock Nearly-tight {VC}-dimension and pseudodimension bounds for piecewise
  linear neural networks.
\newblock {\em The Journal of Machine Learning Research}, 20(1):2285--2301.

\bibitem[Bekkers, 2019]{bekkers2019b}
Bekkers, E.~J. (2019).
\newblock B-spline cnns on lie groups.
\newblock {\em arXiv preprint arXiv:1909.12057}.

\bibitem[Bekkers et~al., 2018]{bekkers2018roto}
Bekkers, E.~J., Lafarge, M.~W., Veta, M., Eppenhof, K.~A., Pluim, J.~P., and
  Duits, R. (2018).
\newblock Roto-translation covariant convolutional networks for medical image
  analysis.
\newblock In {\em Medical Image Computing and Computer Assisted
  Intervention--MICCAI 2018: 21st International Conference, Granada, Spain,
  September 16-20, 2018, Proceedings, Part I}, pages 440--448. Springer.

\bibitem[Bietti et~al., 2021]{bietti2021sample}
Bietti, A., Venturi, L., and Bruna, J. (2021).
\newblock On the sample complexity of learning under geometric stability.
\newblock {\em Advances in Neural Information Processing Systems},
  34:18673--18684.

\bibitem[Brandstetter et~al., 2022]{brandstetter2022lie}
Brandstetter, J., Welling, M., and Worrall, D.~E. (2022).
\newblock Lie point symmetry data augmentation for neural pde solvers.
\newblock In {\em International Conference on Machine Learning}, pages
  2241--2256. PMLR.

\bibitem[Cohen and Welling, 2016a]{pmlr-v48-cohenc16}
Cohen, T. and Welling, M. (2016a).
\newblock Group equivariant convolutional networks.
\newblock In Balcan, M.~F. and Weinberger, K.~Q., editors, {\em Proceedings of
  The 33rd International Conference on Machine Learning}, volume~48 of {\em
  Proceedings of Machine Learning Research}, pages 2990--2999, New York, New
  York, USA. PMLR.

\bibitem[Cohen and Welling, 2016b]{cohen2016group}
Cohen, T. and Welling, M. (2016b).
\newblock Group equivariant convolutional networks.
\newblock In {\em International conference on machine learning}, pages
  2990--2999. PMLR.

\bibitem[Dehmamy et~al., 2021]{dehmamy2021automatic}
Dehmamy, N., Walters, R., Liu, Y., Wang, D., and Yu, R. (2021).
\newblock Automatic symmetry discovery with lie algebra convolutional network.
\newblock {\em Advances in Neural Information Processing Systems},
  34:2503--2515.

\bibitem[Elesedy, 2022]{elesedy2022group}
Elesedy, B. (2022).
\newblock Group symmetry in {PAC} learning.
\newblock In {\em ICLR 2022 Workshop on Geometrical and Topological
  Representation Learning}.

\bibitem[Fuchs et~al., 2020]{fuchs2020se}
Fuchs, F., Worrall, D., Fischer, V., and Welling, M. (2020).
\newblock Se (3)-transformers: 3d roto-translation equivariant attention
  networks.
\newblock {\em Advances in neural information processing systems},
  33:1970--1981.

\bibitem[Hinton and Wang, 2011]{hinton2011}
Hinton, Geoffrey, A.~K. and Wang, S. (2011).
\newblock Transforming auto-encoders.
\newblock In {\em ICANN-11: International Conference on Artificial Neural
  Networks}.

\bibitem[Jumper et~al., 2021]{jumper2021highly}
Jumper, J., Evans, R., Pritzel, A., Green, T., Figurnov, M., Ronneberger, O.,
  Tunyasuvunakool, K., Bates, R., {\v{Z}}{\'\i}dek, A., Potapenko, A., et~al.
  (2021).
\newblock Highly accurate protein structure prediction with alphafold.
\newblock {\em Nature}, 596(7873):583--589.

\bibitem[Keriven and Peyr{\'e}, 2019]{keriven2019universal}
Keriven, N. and Peyr{\'e}, G. (2019).
\newblock Universal invariant and equivariant graph neural networks.
\newblock {\em Advances in Neural Information Processing Systems}, 32.

\bibitem[Kohler and Walter, 2023]{kohlerwalter2023}
Kohler, M. and Walter, B. (2023).
\newblock Analysis of convolutional neural network image classifiers in a
  rotational symmetric model.
\newblock {\em IEEE transactions on pattern analysis and machine intelligence},
  69(8):5203--5218.

\bibitem[Kondor and Trivedi, 2018]{kondor2018generalization}
Kondor, R. and Trivedi, S. (2018).
\newblock On the generalization of equivariance and convolution in neural
  networks to the action of compact groups.
\newblock In {\em International Conference on Machine Learning}, pages
  2747--2755. PMLR.

\bibitem[Krizhevsky et~al., 2012]{krizhevsky2012imagenet}
Krizhevsky, A., Sutskever, I., and Hinton, G.~E. (2012).
\newblock Imagenet classification with deep convolutional neural networks.
\newblock {\em Advances in neural information processing systems}, 25.

\bibitem[LeCun et~al., 1998]{lecun1998gradient}
LeCun, Y., Bottou, L., Bengio, Y., and Haffner, P. (1998).
\newblock Gradient-based learning applied to document recognition.
\newblock {\em Proceedings of the IEEE}, 86(11):2278--2324.

\bibitem[Lee et~al., 2015]{Lee2015}
Lee, C., Xie, S., Gallagher, P., Zhang, Z., and Tu, Z. (2015).
\newblock Deeply-supervised nets.
\newblock In {\em Proceedings of the Eighteenth International Conference on
  Artificial Intelligence and Statistics (AISTATS)}, volume~38, pages 562--570.

\bibitem[Long et~al., 2015]{long2015fully}
Long, J., Shelhamer, E., and Darrell, T. (2015).
\newblock Fully convolutional networks for semantic segmentation.
\newblock In {\em Proceedings of the IEEE conference on computer vision and
  pattern recognition}, pages 3431--3440.

\bibitem[Petersen and Sepliarskaia, 2024]{petersen2024vc}
Petersen, P.~C. and Sepliarskaia, A. (2024).
\newblock {VC} dimensions of group convolutional neural networks.
\newblock {\em Neural Networks}, 169:462--474.

\bibitem[Pollard, 1990]{pollard1990empirical}
Pollard, D. (1990).
\newblock {\em Empirical processes: theory and applications}, volume~2 of {\em
  NSF-CBMS Regional Conference Series in Probability and Statistics}.
\newblock Institute of Mathematical Statistics, Hayward, CA; American
  Statistical Association, Alexandria, VA.

\bibitem[Procesi, 2007]{procesi2007lie}
Procesi, C. (2007).
\newblock {\em Lie groups: an approach through invariants and representations},
  volume 115.
\newblock Springer.

\bibitem[Ren et~al., 2016]{ren2016faster}
Ren, S., He, K., Girshick, R., and Sun, J. (2016).
\newblock Faster r-cnn: Towards real-time object detection with region proposal
  networks.
\newblock {\em IEEE transactions on pattern analysis and machine intelligence},
  39(6):1137--1149.

\bibitem[Sannai et~al., 2021]{sannai2021improved}
Sannai, A., Imaizumi, M., and Kawano, M. (2021).
\newblock Improved generalization bounds of group invariant/equivariant deep
  networks via quotient feature spaces.
\newblock In {\em Uncertainty in Artificial Intelligence}, pages 771--780.
  PMLR.

\bibitem[Shao et~al., 2022]{shao2022theory}
Shao, H., Montasser, O., and Blum, A. (2022).
\newblock A theory of pac learnability under transformation invariances.
\newblock {\em Advances in Neural Information Processing Systems},
  35:13989--14001.

\bibitem[Smets et~al., 2023]{MR4541121}
Smets, B. M.~N., Portegies, J., Bekkers, E.~J., and Duits, R. (2023).
\newblock P{DE}-based group equivariant convolutional neural networks.
\newblock {\em J. Math. Imaging Vision}, 65(1):209--239.

\bibitem[Sosnovik et~al., 2021]{sosnovik2021scale}
Sosnovik, I., Moskalev, A., and Smeulders, A.~W. (2021).
\newblock Scale equivariance improves siamese tracking.
\newblock In {\em Proceedings of the IEEE/CVF Winter Conference on Applications
  of Computer Vision}, pages 2765--2774.

\bibitem[Stroppel, 2006]{stroppel2006locally}
Stroppel, M. (2006).
\newblock {\em Locally compact groups}, volume~3.
\newblock European Mathematical Society.

\bibitem[Vinuesa and Brunton, 2022]{vinuesa2022enhancing}
Vinuesa, R. and Brunton, S.~L. (2022).
\newblock Enhancing computational fluid dynamics with machine learning.
\newblock {\em Nature Computational Science}, 2(6):358--366.

\bibitem[Wang et~al., 2020]{wang2020incorporating}
Wang, R., Walters, R., and Yu, R. (2020).
\newblock Incorporating symmetry into deep dynamics models for improved
  generalization.
\newblock {\em arXiv preprint arXiv:2002.03061}.

\bibitem[Weiler et~al., 2021]{weiler2021coordinate}
Weiler, M., Forr{\'e}, P., Verlinde, E., and Welling, M. (2021).
\newblock Coordinate independent convolutional networks--isometry and gauge
  equivariant convolutions on riemannian manifolds.
\newblock {\em arXiv preprint arXiv:2106.06020}.

\bibitem[Weiler et~al., 2018]{weiler2018learning}
Weiler, M., Hamprecht, F.~A., and Storath, M. (2018).
\newblock Learning steerable filters for rotation equivariant cnns.
\newblock In {\em Proceedings of the IEEE Conference on Computer Vision and
  Pattern Recognition}, pages 849--858.

\bibitem[Zhdanov et~al., 2024]{zhdanov2024clifford}
Zhdanov, M., Ruhe, D., Weiler, M., Lucic, A., Brandstetter, J., and Forr{\'e},
  P. (2024).
\newblock Clifford-steerable convolutional neural networks.
\newblock {\em arXiv preprint arXiv:2402.14730}.

\end{thebibliography}

\end{document}